\documentclass{article}

\usepackage[preprint]{neurips_2026}

\usepackage[utf8]{inputenc} 
\usepackage[T1]{fontenc}    
\usepackage{hyperref}       
\usepackage{url}            
\usepackage{booktabs}       
\usepackage{amsfonts}       
\usepackage{nicefrac}       
\usepackage{microtype}      
\usepackage{xcolor}         
\usepackage{bm}
\usepackage{amsmath}
\usepackage{amsthm}
\usepackage{pgfplots}
\usepackage{graphicx}
\usepackage{subcaption}
\usepackage{sansmath}
\usepackage{float}
\usepackage{makecell}
\pgfplotsset{compat=1.18}

\usetikzlibrary{arrows.meta,decorations.pathreplacing,positioning}

\definecolor{blockyellow}{RGB}{210,205,0}
\definecolor{barblue}{RGB}{58,105,135}
\definecolor{gridcolor2}{RGB}{0,0,0}
\definecolor{shade2}{RGB}{128,128,128}

\definecolor{bluegreen1}{RGB}{0,83,93}
\definecolor{beige1}{RGB}{238,221,192}

\newcommand{\fp}{f_\text{pred}}
\newcommand{\hz}{\hat{\mathbf{z}}}
\newcommand{\mygrid}[6]{
  \begin{scope}[shift={(#1,#2)}]
    \foreach \c/\r in {#5}{
      \fill[barblue] (\c-1,\r-1) rectangle ++(1,1);
    }

    \pgfmathtruncatemacro{\Cols}{#3}
    \pgfmathtruncatemacro{\Rows}{#4}
    \pgfmathtruncatemacro{\Colsm}{\Cols-1}
    \pgfmathtruncatemacro{\Rowsm}{\Rows-1}

    \draw[gridcolor2,line width=.9pt] (0,0) rectangle (\Cols,\Rows);

    \foreach \i in {1,...,\Colsm}{
      \draw[gridcolor2,line width=.75pt] (\i,0) -- (\i,\Rows);
    }
    \foreach \j in {1,...,\Rowsm}{
      \draw[gridcolor2,line width=.75pt] (0,\j) -- (\Cols,\j);
    }

    \node[font=\large] at (\Cols/2,\Rows+1.1) {#6};
  \end{scope}
}

\newtheorem{lemma}{Lemma}
\newtheorem{theorem}{Theorem}

\title{Winner-Take-All bottlenecks enforce disentangled symbolic representations in multi-task learning}

\author{%
  Julian Gutheil\\
  Institute of Machine Learning and Neural Computation \\
  Graz University of Technology \\
  Graz, Austria\\
  \texttt{julian.gutheil@tugraz.at} \\
  \And
  Simon Hitzginger \\
  Institute of Machine Learning and Neural Computation \\
  Graz University of Technology \\
  Graz, Austria\\
  \texttt{simon.hitzginger@tugraz.at} \\
  \AND
  Robert Legenstein\thanks{Corresponding author} \\
  Institute of Machine Learning and Neural Computation \\
  Graz University of Technology \\
  Graz, Austria\\
  \texttt{robert.legenstein@tugraz.at} \\
}

\begin{document}

\maketitle

\begin{abstract}
 Winner-take-all (WTA) networks constitute a central circuit motif in cortical networks of the brain. In addition, WTA-like activations are abundant in modern deep learning models in the form of the softmax activation for example in attention layers of transformers. While their role in the extraction of latent factors has been studied for relatively simple generative models, their role in the context of highly non-linearly entangled latent factors has remained elusive. In this article, we show that a WTA bottleneck within a deep neural network can enforce under certain well-defined conditions the extraction of categorical latent factors of the data in a multi-task learning setup. In particular, we prove that the representation that emerges in the WTA bottleneck is highly symbolic, where a single neuron or a population of neurons encodes the presence of a single abstract feature such as a specific object, color, or position. We furthermore show empirically on two datasets, that this also holds for architectures and setups that do not fully comply with the assumptions of our theorem and demonstrate the advantages of the acquired symbolic representation for generalization. Our proposed model provides insights into the generalization capabilities of deep neural networks with WTA-like components and may serve as an interface between symbolic and subsymbolic AI systems. 
\end{abstract}

\section{Introduction}
\label{sec:intro}

Winner-take-all (WTA) networks have been proposed to constitute a central circuit motif in cortical networks \citep{douglas1989canonical}. In biology, WTA motifs arise from lateral inhibition where excitatory neurons project to inhibitory neuron populations which in turn provide inhibitory feedback to these excitatory cells \citep{avermann2012microcircuits}. Theoretical studies have emphasized the role of WTA motifs in the context of learning, showing that local spike timing-dependent plasticity (STDP) in WTA architectures implement an expectation-maximization algorithm in simplified models that is able to extract the hidden causes of sensory data \citep{nessler2009stdp,legenstein2017probabilistic,jonke2017feedback}. Even earlier, WTA architectures were investigated in the context of competitive learning \citep{rumelhart1985feature}. Apart from their biological relevance, WTA-like activations are abundant in modern deep learning models in the form of the softmax activation, which can be viewed as a differential version of the hard WTA function. Besides its role as a normalizer when computing the probability distribution over discrete classes in the final classification layer of classical feed-forward and recurrent network architectures, it is also a central component of attention layers in transformers \citep{vaswani2017attention}.

While single-layer WTA networks have been shown to enable the extraction of latent factors from sensory input under the assumption of relatively simple generative models \citep{nessler2009stdp,legenstein2017probabilistic}, their role in the context of observations that arise from highly non-linearly entangled latent factors has remained elusive. In this article, we show that a WTA bottleneck within a deep neural network --- that is, a layer consisting of several WTA activations --- enforces under certain conditions the extraction of categorical latent factors of the data in a multi-task learning setup. 
In particular, we consider a deep neural network with a WTA bottleneck in its penultimate layer that observes an arbitrary information-preserving transformation of categorical latent factors $\mathbf{z}$ at its inputs. We prove that, when the network was trained to perfectly solve a large enough number of linear classification tasks on the latent factors $\mathbf{z}$, then the representation $\hat{\mathbf{z}}$ of the WTA bottleneck is a permutation of $\mathbf{z}$ that is further constrained by the structure of the WTA bottleneck (referred to as {\em structured permutation}).

We then show empirically on two data sets, that similar representations also emerge for architectures that do not fully comply with the assumptions of our theorem. 
The resulting representation  is disentangled  \citep{higgins2018towards} and highly symbolic, where a single neuron or a population of neurons encodes the presence of a single abstract feature such as a specific object, color, or position. Such representations are often called factorized representations \citep{johnston2023abstract,bengio2013representation}. We refer to this specific type of representation where latent features are encoded by populations of neurons as {\em symbolic representations}. It has been argued that disentangled representations benefit generalization \citep{bengio2013representation,johnston2023abstract}  (but see \citep{locatello2019challenging}).
We demonstrate this for the acquired symbolic representations by considering generalization to various types of test settings. 
Our approach to disentangled (symbolic) representation learning has the potential to link subsymbolic and symbolic AI and could therefore play a key role in neurosymbolic AI.

\section{Related work}

\paragraph{Learning in WTA architectures} First ideas regarding learning in competitive neural ensembles date back to \cite{rumelhart1985feature,rumelhart1986feature}. They considered a single-layer neural network where the neuron with the largest activation (the winner) updates its weights, leading to a clustering of the input space. Later, this idea was investigated in the context of theoretical neuroscience research. In \cite{nessler2009stdp}, it was shown that STDP enables WTA circuits to extract the latent features of its inputs in a well-defined graphical model. This work was generalized to networks with more biologically plausible soft inhibition \citep{jonke2017feedback,legenstein2017probabilistic} and to sheets of neurons with lateral inhibition implementing multiple WTA circuits \citep{bill2015distributed}. WTA architectures have also been studied in the context of hierarchical object recognition \citep{riesenhuber1999hierarchical} and assembly creation for long term memory \citep{legenstein2018long}. See \citep{maass2000computational} for an analysis of the computational power of WTA activations.

\paragraph{Disentangled representations through multi-task learning}
A large number of methods for disentangled representation learning (DRL) exist, see \citep{bengio2013representation,wang2024disentangled} for reviews. Recently, the role of multi task learning for DRL has been studied \citep{johnston2023abstract,vafidis2025disentangling}. In particular, \cite{vafidis2025disentangling} have proven that nonlinearly entangled continuous latent factors can be disentangled through multi task learning given a set of sufficiently diverse tasks that depend linearly on the latent factors. This result is remarkable as it provides a theoretical guarantee for the emergence of disentangled representations. We provide a related result for categorical latent variables with a surprising twist. We find that, when multi task learning is performed on a WTA bottleneck, then one can not only recover a linear combination of categorical latent variables, but the variables themselves.

\section{Problem formulation}
\label{sec:problem}

We now formally define the setup under which we study the emergence of symbolic representations.
\paragraph{Latent features:} We consider a set of $m$ latent categorical random variables $Z_1, \dots , Z_m$, where $Z_i$ can take values in $\{1, \dots, l_i\}$, with $l_i \in \mathbb{N}$, $l_i\ge 2$ as the number of categories. The corresponding one-hot vectors are $\mathbf{z}^{(1)}, \dots ,\mathbf{z}^{(m)}$, with $\mathbf{z}^{(i)} \in \{0,1\}^{l_i}$. The total latent vector $\mathbf{z}$ is then the concatenation of these vectors $\mathbf{z}=(\mathbf{z}^{(1)}; \dots ; \mathbf{z}^{(m)})^{\top} \in \{0,1\}^l$ with $l=\sum_i l_i$ (see matrix $C$ in Fig.~\ref{fig:setup}b). We denote by $\mathcal{Z}$ the set of all possible total latent vectors defined in that way.
\paragraph{Symbolic representations:} We first compare notions of disentangled representations and specifically define our notion of symbolic representations. \cite{vafidis2025disentangling} and \cite{ostojic2024computational} defined abstract and disentangled representations for continuous latents, which does not necessarily apply directly to categorical variables. Nevertheless, we discuss these notions here in order to bring our notion of symbolic representations into context. According to \citep{vafidis2025disentangling}, an abstract representation of the latent variables 
$z_1,\dots, z_m$ represents each $z_i$ linearly and approximately mutually orthogonally. Disentangled representations encode each $z_i$ orthogonally, without the necessity of linearity. Note that under this definition, axis-alignment is not required, i.e., it is not required that a given neuron represents a single latent factor. We demand axis-alignment for symbolic representations in the following sense. We say that a representation $\hz=(\hat z_1, \dots, \hat z_{l'})^\top$ is a symbolic representation of the total latent variable vector $\mathbf{z}=(z_1, \dots , z_l)^{\top}$ if for each $i$ there exists a set of neurons such that $z_i=1$ implies that at least one neuron in the set is active and furthermore for each neuron in the set, activity of the neuron implies $z_i=1$. In other words, the presence of the latent category can easily be checked by checking whether at least one neuron in the corresponding set is active. This includes the special case when each category is represented by exactly one neuron. A formal definition is given in Appendix \ref{app:Problem}.
\paragraph{Observables:} 
We observe a nonlinearly transformed version of $\mathbf{z}$, i.e., we a assume an injective function $\Phi: \{0,1\}^l \rightarrow \mathbb{R}^d$, see Fig.~\ref{fig:setup}a. For a given $\mathbf{z}$, the observable vector $\mathbf{x}$ is then
\begin{equation}
    \mathbf{x} = \Phi(\mathbf{z}).
\end{equation}
\paragraph{WTA Encoder:} We consider a winner-take-all (WTA) encoder $f_\text{enc}:\mathbb{R}^d \rightarrow \{0,1\}^l$ which takes as input $\mathbf{x}$  and outputs $\hat{\mathbf{z}}=(\hat{\mathbf{z}}^{(1)}; \dots ; \hat{\mathbf{z}}^{(m)})^{\top}$, see Fig.~\ref{fig:setup}a. $\hat{\mathbf{z}}$ consists of $m$ binary one-hot vectors which are concatenated together and $\hat{\mathbf{z}}$ has the same structure as the latents $\mathbf{z}$ (we assume identical structure in the proof, although the order of the $l_i$'s can be permuted w.l.o.g. Experiments will be performed with differing structures). The WTA encoder $f_\text{enc}$ consists of a feed-forward neural network $f_{\text{MLP}}: \mathbb{R}^d \rightarrow \mathbb{R}^l$ resulting in an output $\mathbf{a} \in \mathbb{R}^{l}$ and $m$ WTA heads $f_{\text{WTA}}$ applied on top of $\mathbf{a}$. The output $\mathbf{a}$ of $f_{\text{MLP}}$ is grouped into $m$ output groups $\mathbf{a}^{(i)} \in \mathbb{R}^{l_i}$. One WTA function $f_{\text{WTA}}: \mathbb{R}^{l_i} \rightarrow \{0,1\}^{l_i}$ is applied to each of these groups individually. The function $f_{\text{WTA}}$ sets its output corresponding to the maximum value of $\mathbf{a}^{(i)}$ to $1$ and all other values to $0$ (we assume that there is a unique largest component).
The output of the $i$-th WTA function is denoted by $\hat{\mathbf{z}}^{(i)}$ and the output vectors of all WTAs are concatenated in one vector $\hat{\mathbf{z}} \in \{0,1\}^l$
\begin{equation}
    \hat{\mathbf{z}}^{(i)} = f_{\text{WTA}}(\mathbf{a}^{(i)}), \quad \hat{\mathbf{z}}= (\hat{\mathbf{z}}^{(1)};...;\hat{\mathbf{z}}^{(m)})^{\top}.
\end{equation}
We refer to this group of WTA heads as a multi-WTA head.
\begin{figure}
    \centering
    \begin{minipage}[t]{0.36\linewidth}
        \vspace{0pt}
        \centering
        \begin{subfigure}[t]{\linewidth}
            \vspace{2pt}
            \centering
            \begin{tikzpicture}[
                arr/.style={-{Latex[length=3mm,width=2mm]}, very thick},
                brace/.style={decorate, decoration={brace, amplitude=6pt}, very thick},
                every node/.style={inner sep=1pt},
                x=0.64cm,
                y=0.73cm,
            ]
    
            \node[rotate=90,font=\small \sffamily] at (-0.25,5.2) {Task $1$};
            \node[rotate=90,font=\small \sffamily] at (0.25,5.2) {Task $2$};
            \node[rotate=90,font=\small \sffamily] at (0.75,5.2) {\dots};
            \node[rotate=90,font=\small \sffamily] at (5.25,5.2) {Task $n$};
            
            \node[font=\sansmath \small] at (-0.75,5.2) {$\hat{\bm{y}}$};
            \node[font=\sansmath \small \sffamily] at (-1.6,3.2) {$\bm{f}_\mathrm{pred}$};
            \node[font=\sansmath \small] at (-0.75,1.55) {$\bm{x}$};
            \node[font=\sansmath \small] at (-0.25,0.25) {$\bm{z}$};
            
            \draw[brace] (-0.75,1.9) -- (-0.75,4.5);
            
            \fill[blockyellow]
                (0,3.9) -- (-0.5,4.5) --
                (5.5,4.5) -- (5,3.9) -- cycle;
            \node[font=\sansmath \small] at (2.5,4.2) {$\bm{W}_{\mathrm{out}}$};
            
            \node[font=\sansmath \small] at (-0.25,3.55) {$\hat{\bm{z}}$};
            
            \fill[barblue] (0,3.3) rectangle (2.4,3.8);
            \fill[barblue] (2.6,3.3) rectangle (5,3.8);
            
            \node[white,font=\sansmath \small \bfseries] at (0.40,3.55) {$1$};
            \node[white,font=\sansmath \small \bfseries] at (1.2,3.55) {$0$};
            \node[white,font=\sansmath \small \bfseries] at (2,3.55) {$0$};
            
            \node[white,font=\sansmath \small \bfseries] at (3,3.55) {$0$};
            \node[white,font=\sansmath \small \bfseries] at (3.8,3.55) {$0$};
            \node[white,font=\sansmath \small \bfseries] at (4.6,3.55) {$1$};
            
            \fill[blockyellow] (0,2.6) rectangle (2.4,3.2);
            \fill[blockyellow] (2.6,2.6) rectangle (5,3.2);
            
            \node[font=\small \sffamily] at (1.2,2.9) {WTA head};
            \node[font=\small \sffamily] at (3.8,2.9) {WTA head};
            
            \fill[blockyellow]
                (-0.5,1.9) -- (0,2.5) --
                (5,2.5) -- (5.5,1.9) -- cycle;
            \node[font=\sffamily \small] at (2.5,2.2) {MLP Encoder};
            
            \fill[barblue] (-0.5,1.3) rectangle (5.5,1.8);
            
            \fill[blockyellow]
                (-0.5,1.2) -- (5.5,1.2) --
                (5,0.6) -- (0,0.6) -- cycle;
            \node[font= \sffamily \small] at (2.5,0.9) {$\Phi$};
            
            \fill[barblue] (0,0) rectangle (2.4,0.5);
            \fill[barblue] (2.6,0) rectangle (5,0.5);
            
            \node[white,font=\sansmath \small \bfseries] at (0.40,0.25) {$1$};
            \node[white,font=\sansmath \small \bfseries] at (1.2,0.25) {$0$};
            \node[white,font=\sansmath \small \bfseries] at (2,0.25) {$0$};
            
            \node[white,font=\sansmath \small \bfseries] at (3,0.25) {$0$};
            \node[white,font=\sansmath \small \bfseries] at (3.8,0.25) {$1$};
            \node[white,font=\sansmath \small \bfseries] at (4.6,0.25) {$0$};
            \node[anchor=north west, font=\sffamily\bfseries\large,
            xshift=-4pt,yshift=4pt] at (current bounding box.north west) {a};
            \end{tikzpicture}
        \end{subfigure}
    \end{minipage}
    \hspace{0.35cm}
    \begin{minipage}[t]{0.6\linewidth}
        \vspace{0pt}
        \centering
        \begin{subfigure}[t]{\linewidth}
            \centering
            \begin{tikzpicture}[x=0.35cm,y=0.4cm]
            
            \mygrid{0}{0}{6}{9}
            {
              3/9,6/9,
              3/8,5/8,
              3/7,4/7,
              2/6,6/6,
              2/5,5/5,
              2/4,4/4,
              1/3,6/3,
              1/2,5/2,
              1/1,4/1
            }
            {$C$}
            \draw[line width=2.6pt] (3,0) -- (3,9);
            
            \node[font=\Large] at (6.9,4.5) {$=$};
            
            \mygrid{8}{0}{6}{9}
            {
              2/9,4/9,
              3/8,4/8,
              1/7,4/7,
              2/6,6/6,
              3/5,6/5,
              1/4,6/4,
              2/3,5/3,
              3/2,5/2,
              1/1,5/1
            }
            {$\hat{C}$}
            \draw[line width=2.6pt] (11,0) -- (11,9);
            
            \node[font=\Large] at (15.0,4.5) {$\times$};
            
            \mygrid{16}{1.65}{6}{6}
            {
              4/6,
              6/5,
              5/4,
              1/2,
              2/1,
              3/3
            }
            {$G$}
            \draw[line width=2.6pt] (19,1.65) -- (19,7.65);
            \draw[line width=2.6pt] (16,4.65) -- (22,4.65);
            \node[anchor=north west, 
            font=\sffamily\bfseries\large,
            yshift=4pt,
            xshift=-12pt] at (current bounding box.north west) {b};
            \end{tikzpicture}
        \end{subfigure}
    \end{minipage}
    \caption{{\textbf{Theoretical framework.} \textbf{a)} Network setup, showing representations (blue) and mappings (olive). Latent vector $\mathbf{z}$ (bottom) examplified for two latent variables with three categories each. Each latent is coded as a one-hot vector. The latents are mapped through an injective function $\Phi$ to an entangled represenation $\mathbf{x}$. The WTA encoder $f_\textnormal{enc}$ maps $\mathbf{x}$ to a representation $\hz$, which is constrained by a multi-WTA head. A readout layer (top) is applied to $\hz$ and trained to solve $n$ tasks. \textbf{b)} Structured permutation. Matrix $C$ illustrates the "code" matrix for two latent variables with three categories each (blue indicates value $1$, white value $0$). The rows together encode all possible latent vectors $\mathbf{z}$ in lexicographical order. The columns of matrix $\hat C$, are a structured permutation of the columns of $C$ in the sense that the latent variables are permuted and categories within the latent variables are permuted. $C$ can be obtained from $\hat{C}$ by $C = \hat{C} G$ for structured permutation matrix $G$. } 
    }
    \label{fig:setup}
\end{figure}
\paragraph{Readout layer:} The representation $\hat{\mathbf{z}}$ serves as input for a fully connected readout layer $f_\text{readout}:\{0,1\}^l \rightarrow (0,1)^n$ with $n$ sigmoidal outputs, which produce the output $\hat{\mathbf{y}} \in \mathbb{R}^{n}$ according to
\begin{align}
    \hat{\mathbf{z}}_ {\text{out}} &= W_{\text{out}}\hat{\mathbf{z}} \label{eq:zout}\\
    \hat{\mathbf{y}} &=  \sigma(\hat{\mathbf{z}}_ {\text{out}}),
    \label{eq:taskout}
\end{align}
with weight matrix $W_{\text{out}} \in \mathbb{R}^{n \times l}$, see Fig.~\ref{fig:setup}a. Note that we do not use a bias vector in this layer. It could be incorporated, but as we will see below, it is not needed. 
$\sigma$ denotes the component-wise logistic sigmoid function. This layer will predict the posterior probabilities for $n$ linear tasks, see next paragraph. We refer to the encoder together with the readout layer as the predictor network $f_\text{pred}: \mathbb{R}^{d} \rightarrow (0,1)^n$, $f_\text{pred}:=f_\text{readout} \circ f_\text{enc}$.
\paragraph{Definition of linear tasks:} We define $n$ linear binary classification tasks based on the latent categorical variable vector $\mathbf{z}$. 
The $i$-th task assumes that $\mathbf{z}$ is originating from one of two classes indicated by $A_i=1$ (class 1) and $A_i=0$ (class 0) with respective prior probabilities $P(A_i=1)$ and $P(A_i=0)$. 
If $\mathbf{z}$ originated for class 1 (resp.~0), it is assumed that each $\mathbf{z}^{(j)}$ was drawn from a categorical distribution with parameters $\mathbf{p}^{(i,j)}$ (resp.~$\mathbf{q}^{(i,j)}$). Again, we write $\mathbf{p}^{(i)}$ (resp.~$\mathbf{q}^{(i)}$) for the concatenation of these parameter vectors, i.e., $\mathbf{p}^{(i)}=(\mathbf{p}^{(i,1)};\dots;\mathbf{p}^{(i,m)})^{\top}$. 
Hence, the tuple $(\mathbf{p}^{(i)},\mathbf{q}^{(i)},P(A_i=1))$ defines task $i$. The task of the readout layer is to predict the posterior probability $P(A_i=1|\mathbf{z})$ that some $\mathbf{z}$ belongs to class $A_i=1$.
It is convenient to stack the posteriors of all tasks together. We thus define $P(A|\mathbf{z}) \in [0,1]^n$ as a vector with the $i$-th component given by $P(A_i=1|\mathbf{z})$. A standard derivation shows that this posterior can be written as  $P(A|\mathbf{z}) = \sigma \left(W \mathbf{z} + \mathbf{b}\right)$,
with $W=[w_{ij}]_{\substack{i=1,\dots,n\\j=1,\dots ,l}}$ and $\mathbf{b}=(b_1, \dots, b_n)^\top$ for $w_{ij}= \ln \frac{p^{(i)}_j}{q^{(i)}_j}$ and $b_i=\ln \frac{P(A_i=1)}{P(A_i=0)}$, see Appendix \ref{app:Problem}.
Note however that, since $\mathbf{z}$ consists of a constant number of 1-entries, $\mathbf{b}$ can be incorporated into $W$, so
\begin{equation}
    \label{eq:posteriorAllTask}
    P(A|\mathbf{z}) = \sigma \left(W \mathbf{z}\right),
\end{equation}
for a suitable re-definition of W (see Appendix \ref{app:Problem}).

This shows that we can write $P(A|\mathbf{z})$ as a linear mapping of the latent vector $\mathbf{z}$ transformed by a sigmoid function. 

\paragraph{Structured permutation:} We will show that under multi-task learning, the output of the multi-WTA head evolves a symbolic representation of the latent variables. In particular, in the theoretical setup, the latent variables are recovered exactly. Obviously, this reconstruction can only be done up to some permutation, as the order of latents and the order of categories within the latents is arbitrary. That means, that in the representation $\hat{\mathbf{z}}$, the latent variables can be permuted and within the latent variables, the categories can be permuted relative to $\mathbf{z}$. Also, a variable $Z_i$ can only be represented by WTA head $j$ if its number of categories matches the number of outputs of the WTA gate. It is therefore a structured permutation that depends on the structure of the latents. We illustrate a structured permutation in Fig.~\ref{fig:setup}b and formally define it in Appendix \ref{app:Problem}. Note that a representation that is a structured permutation of the latents is by definition a symbolic representation. We will show in simulations that, when we relax some conditions of the theoretical setup (where a structured permutation is not defined anymore), still, symbolic representations emerge.

\section{Theoretical Results}
\label{sec:theoresults}

The predictor network $f_\text{pred}$ is trained to predict the posterior probabilities $P(A|\mathbf{z})$. Note that due to our assumption that the nonlinear entanglement transformation $\Phi$ is injective and $\mathbf{x}=\Phi(\mathbf{z})$, we have $P(A|\mathbf{z})=P(A|\mathbf{x})$. In other words, all information is available to predict this posteriors from the entangled representation in $\mathbf{x}$. Assume that the predictor network has sufficient capacity and has been perfectly trained, i.e., it is an optimal predictor. Formally, we say that a predictor $\fp$ is optimal (with respect to posterior distributions $P(A|\mathbf{z})$ and entanglement transformation $\Phi)$ if 
\begin{equation}
\label{eq:opt_predictor}
\fp(\Phi(\mathbf{z}))=P(A|\mathbf{z}) \qquad \text{for all } \mathbf{z} \in \mathcal{Z}.   
\end{equation}
We first show that in this case, $\mathbf{z}$ is linearly decodable from the readout activations (Eq.~\ref{eq:zout}) $\hat{\mathbf{z}}_{\text{out}}$ as well as from the multi-WTA output $\hat{\mathbf{z}}$ if $W$ has full rank (recall that $W$ defines the tasks, see Eq.~\eqref{eq:posteriorAllTask}).

As defined above, $\hat{\mathbf{y}}=\fp(\Phi(\mathbf{z}))$ and by assumption, $\fp$ is an optimal predictor. We combine Eqs.~\eqref{eq:posteriorAllTask} and \eqref{eq:opt_predictor}
\begin{equation}
    \hat{\mathbf{y}} = \sigma \left(W \mathbf{z}\right).
    \label{eq:yz}
\end{equation}

We see that the latent vector $\mathbf{z}$ is encoded in the posterior prediction $\hat{\mathbf{y}}$. We know from Eq.~\eqref{eq:taskout} that $\hat{\mathbf{y}} =  \sigma(\hat{\mathbf{z}}_ {\text{out}})$. Therefore, we combine Eq.~\eqref{eq:yz} and Eq.~\eqref{eq:taskout} and get
\begin{equation}
    \sigma(\hat{\mathbf{z}}_ {\text{out}}) = \sigma \left(W \mathbf{z} \right)
    \quad \Leftrightarrow  \quad \hat{\mathbf{z}}_ {\text{out}} =  W \mathbf{z}.
    \label{eq:zoutlinear}
\end{equation}

To solve Eq.~\eqref{eq:zoutlinear} for $\mathbf{z}$, we can use the pseudo-inverse and get
\begin{equation}
    \mathbf{z}=(W^{\top}W)^{-1}W^{\top} \hat{\mathbf{z}}_ {\text{out}}.
\end{equation}
Hence, $\mathbf{z}$ is linearly decodable from $\hat{\mathbf{z}}_ {\text{out}}$ if $W^{\top}W$ is invertible, that is, if $W$ has full rank. Since $\hat{\mathbf{z}}_ {\text{out}}=W_{\text{out}} \hat{\mathbf{z}}$, this implies
\begin{equation}
    \mathbf{z}=(W^{\top}W)^{-1}W^{\top}W_{\text{out}}\hat{\mathbf{z}}.
    \label{eq:zeqz}
\end{equation}

Hence, $\mathbf{z}$ can be obtained by a linear mapping from $\hat{\mathbf{z}}$ if $W$ has full rank, which holds for each $\mathbf{z} \in \mathcal{Z}$ and its corresponding encoding $\hat{\mathbf{z}}$. We can arrange all $\mathbf{z} \in \mathcal{Z}$ in a code matrix $C$ where each row is one latent vector $\mathbf{z}$. This can be done using an arbitrary order of the the rows, but for simplicity we can imagine it in lexicographical order as shown for an example in Fig.~\ref{fig:setup}b. Each vector has length $l=\sum_i l_i$ and there are $p := \prod_i l_i$ such vectors. Hence we have $C\in \{0,1\}^{p \times l}$. We define a similar code matrix $\hat C$ for the encodings $\hat{\mathbf{z}}$. From Eq.~\eqref{eq:zeqz}, we know that we can write 
\begin{equation}
    \label{eq:linmap}
    C=\hat C G
\end{equation}
for $G=((W^{\top}W)^{-1}W^{\top}W_{\text{out}})^\top$. Our main theoretical result is that the multi-WTA representation $\hat{\mathbf{z}}$ obtained in this way is a structured permutation of the latent variables $\mathbf{z}$, i.e., the columns of $\hat C$ are a structured permutation of the columns of $C$. In other words, due to the multi-WTA bottleneck, one obtains a more strongly constrained symbolic representation. We prove this for the case where the number of latent categories is the same length $l_\text{c}$ for all latents, i.e., $l_i=l_\text{c}$ for all $i$, for simplicity, but a generalization to mixed latent category sizes seems straight-forward. We say that a matrix $C$ has an $m$-fold one-hot column-block structure if every row of $C$ consists of $m$ consecutive blocks where each block has exactly one $1$ entry and the other entries in the block are $0$. Both matrices $C$ and $\hat C$ have an $m$-fold one-hot column-block structure by design, where each block is of length $l_\textnormal{c}$. The proof of the following theorem is provided in Appendix \ref{app:Proof_permutation}.
\begin{theorem}
\label{thm:Theorem_main}
    Let $m, l_\textnormal{c}$ be natural numbers with $l_\textnormal{c}\ge 2$. Let $l=m l_\textnormal{c}$ and $p=l_\textnormal{c}^m$. Let $C,\hat C \in \{0,1\}^{p \times l}$ be two matrices with an $m$-fold one-hot column-block structure. Both $C$ and $\hat C$ consist of all possible rows of that type. 
Assume that $C= \hat C G$ for some matrix $G\in \mathbb{R}^{l \times l}$. Then, the columns of $\hat C$ are a structured permutation of the columns of $C$.
\end{theorem}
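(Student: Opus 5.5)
Throughout, the rows of $C$ and $\hat C$ are paired: row $r$ of $\hat C$ is the encoding $\hat{\mathbf{z}}$ of the latent $\mathbf{z}$ in row $r$ of $C$. Let $\mathcal{S}\subset\{0,1\}^l$ denote the set of all vectors with $m$-fold one-hot block structure, $|\mathcal{S}|=p$. Since both $C$ and $\hat C$ contain every element of $\mathcal{S}$ exactly once, the relation $C=\hat C G$ says that the linear map $L:=G^\top$ restricts to a \emph{bijection} $F:\mathcal{S}\to\mathcal{S}$ with $F(\hat{\mathbf{z}})=\mathbf{z}$.

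Write $\mathbf{g}_{j,a}:=L\,\mathbf{e}_{j,a}$ for the image of the unit vector of category $a$ in block $j$. By linearity, $F(\hat{\mathbf{z}})=\sum_j \mathbf{g}_{j,a_j}$, where $a_j$ is the active category of block $j$ in $\hat{\mathbf{z}}$. The key consequence is this: if $\hat{\mathbf{z}}$ and $\hat{\mathbf{z}}'$ differ only in block $j$, switching category $a$ to $a'$, then
\[
F(\hat{\mathbf{z}}')-F(\hat{\mathbf{z}})=\mathbf{d}:=\mathbf{g}_{j,a'}-\mathbf{g}_{j,a},
\]
and $\mathbf{d}$ does not depend on the other blocks. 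The goal is to show that $F$ maps each block of $\hat{\mathbf{z}}$ onto a single block of $\mathbf{z}$ by a permutation of categories.

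\textbf{Step 1 (counting argument).} Any difference $\mathbf{d}=\mathbf{y}'-\mathbf{y}$ of two elements of $\mathcal{S}$ has, in each block, either $0$ or a pattern $\mathbf{e}_b-\mathbf{e}_c$ with $b\neq c$. Let $k$ be the number of blocks where it is nonzero. The number of ordered pairs $(\mathbf{y},\mathbf{y}')\in\mathcal{S}^2$ with $\mathbf{y}'-\mathbf{y}=\mathbf{d}$ is exactly $l_\textnormal{c}^{\,m-k}$: the $k$ support blocks are forced and the other blocks are free.

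Now fix $j$ and $a\neq a'$. There are $l_\textnormal{c}^{\,m-1}$ pairs $(\hat{\mathbf{z}},\hat{\mathbf{z}}')$ that differ only by switching block $j$ from $a$ to $a'$. All of them are mapped by $F$ to pairs with the same difference $\mathbf{d}$. Because $F$ is injective, these image pairs are distinct, so
\[
l_\textnormal{c}^{\,m-1}\le l_\textnormal{c}^{\,m-k}, \quad\text{hence } k\le 1 .
\]
Injectivity also gives $\mathbf{d}\neq 0$, so $k=1$. Thus $F$ maps Hamming-adjacent points (differing in one block) to Hamming-adjacent points. More precisely, $\mathbf{g}_{j,a'}-\mathbf{g}_{j,a}=\mathbf{e}_{k,b'}-\mathbf{e}_{k,b}$ for some block $k$ and categories $b\neq b'$.

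\textbf{Step 2 (a block-to-block map).} For a fixed $j$, the block $k$ is the same for all pairs $a,a'$. Indeed, $\mathbf{d}_{a,a''}=\mathbf{d}_{a,a'}+\mathbf{d}_{a',a''}$. If the two summands were supported in different blocks, the sum would be supported in two blocks, contradicting Step 1. This defines a map $\pi:j\mapsto k$.

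The map $\pi$ is injective. Suppose $\pi(j)=\pi(j')=k$ with $j\neq j'$. Changing both blocks $j$ and $j'$ simultaneously yields a difference $\mathbf{d}+\mathbf{d}'$ supported only in block $k$. Running the same count as in Step 1 with the pairs that change exactly blocks $j$ and $j'$ gives $l_\textnormal{c}^{\,m-2}$ preimage pairs. They map to pairs with a difference supported in at most one block. If the difference is nonzero, it has $l_\textnormal{c}^{\,m-1}$ realizing pairs, which is no contradiction by itself. So I would instead argue combinatorially. Fix all blocks except $j$ and $j'$. This gives $l_\textnormal{c}^2$ distinct points, and all their images agree outside block $k$, because only $\mathbf{g}_{j,\cdot}$ and $\mathbf{g}_{j',\cdot}$ vary and these differ only in block $k$. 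But there are only $l_\textnormal{c}$ elements of $\mathcal{S}$ that agree outside block $k$, and $l_\textnormal{c}^2>l_\textnormal{c}$ contradicts injectivity. So $\pi$ is a permutation of the blocks.

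\textbf{Step 3 (permutation of categories).} Fix $j$ and any $\hat{\mathbf{z}}$. Varying $a_j$ over $\{1,\dots,l_\textnormal{c}\}$ gives $l_\textnormal{c}$ distinct images that agree outside block $\pi(j)$. They are therefore exactly the $l_\textnormal{c}$ one-hot settings of block $\pi(j)$, which yields a bijection $\rho_j$ of categories. Because the differences $\mathbf{d}$ are independent of the other blocks, $\rho_j$ does not depend on $\hat{\mathbf{z}}$. Hence
\[
F(\hat{\mathbf{z}})_{\pi(j),\rho_j(a)}=\hat z_{j,a}\quad\text{for all }\hat{\mathbf{z}}\in\mathcal{S}.
\]
In other words, the columns of $C$ are obtained from those of $\hat C$ by the structured permutation $(\pi,\{\rho_j\})$, which proves Theorem~\ref{thm:Theorem_main}.

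The main obstacle is Step 1, namely showing that a single-block change maps to a single-block change. The counting of realizing pairs, combined with injectivity of $F$ and the fact that $\mathbf{d}$ is independent of the other blocks, is the crux. The rest is bookkeeping that follows the same pattern of counting images that agree outside a block.
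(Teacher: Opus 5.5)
Your proof is correct, and it takes a genuinely different route from the paper's.

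\textbf{The paper's route.} The paper works column by column on $G$. Its Lemma~1 uses that each column of $C$ is a $\{0,1\}$-valued function on the rows of $\hat C$ that is additive over blocks. From this it concludes that, within each block of each column of $G$, the entries take at most two values differing by $1$. This gives $C=\hat C Q+\bm{1}_p\mathbf{b}^\top$ with $Q\in\{0,1\}^{l\times l}$, normalized to have no all-ones block. The theorem then proceeds in four steps:
\begin{enumerate}
\item $\mathbf{b}=\bm{0}$, because the attainable values $\{0,\dots,s_{\max}\}$ shifted by $b_j$ must equal $\{0,1\}$;
\item column sums give exactly one $1$ per column of $Q$;
\item distinctness of the columns of $C$ makes $Q$ a permutation matrix;
\item the one-hot structure of $C$ forces the block structure.
\end{enumerate}

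\textbf{Your route.} You work with whole image vectors, one input block at a time. You treat $G^\top$ as a bijection $F$ of $\mathcal{S}$. Your double count of ordered pairs with a prescribed difference shows that single-block changes go to single-block changes; in effect, $F$ is an automorphism of the Hamming graph on $\mathcal{S}$. The block map $\pi$ and the category maps $\rho_j$ then follow from linearity plus counting within slices.

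\textbf{What each buys.} You never have to pin down $G$, which is not unique when $m\ge 2$ (then $\hat C$ has rank $m(l_\textnormal{c}-1)+1<l$). You only use the differences $\mathbf{g}_{j,a'}-\mathbf{g}_{j,a}$, which $F$ determines, so the paper's offset $\mathbf{b}$ and its all-ones-block normalization become unnecessary. In exchange, the paper obtains an explicit canonical $0/1$ matrix $Q$ with $C=\hat C Q$. Its Lemma~1 also needs only that $\hat C$ contains all rows and that $C$ is $\{0,1\}$-valued, whereas you use the bijectivity of $F$ at every step.

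\textbf{One small point.} In Step~3, you can make the independence of $\rho_j$ from the other blocks explicit. Since $F(\hat{\mathbf{z}})\in\mathcal{S}$ and $F(\hat{\mathbf{z}}')-F(\hat{\mathbf{z}})=\mathbf{e}_{\pi(j),b'}-\mathbf{e}_{\pi(j),b}$, the image $F(\hat{\mathbf{z}})$ must have its block-$\pi(j)$ one at position $b$. That position is fixed by $(j,a)$ and a chosen $a'\neq a$, not by the remaining blocks of $\hat{\mathbf{z}}$.
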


\section{Experiments}
\label{sec:results}
Our analysis in Section \ref{sec:theoresults} revealed that categorical latent variables can be recovered from the multi-WTA bottleneck of an optimal multi-task predictor. This raises the question whether such a predictor can be trained. After all, WTA nonlinearities are not differentiable and potentially hard to train as a representation has to be found by the optimization algorithm that maps categories of each latent variable to a single WTA gate. Intuitively, this may induce a large number of local optima in the error landscape. A potential solution could be to replace hard WTA gates by softmax units. However, our simulations showed that this approach tends to produce blury output distributions at the WTA gates (i.e., there is no clear winner), which turned out to be insufficient for the extraction of a symbolic representation of latent variables. 
We therefore adopted an approach with a hard multi-WTA bottleneck. To resolve the gradient problem and to encouraging exploration, we used the Straight-Through Gumbel-Softmax Estimator with Gumbel noise \citep{jang2017categorical} in combination with a high learning rate and a high temperature coefficient.

\subsection{Symbolic representations from nonlinearly entangled latent variables}
\label{sec:results_symbolic}

In order to obtain data with known ground-truth latents, we generated inputs $\mathbf{x}$ from given latent vectors $\mathbf{z}$ via a nonlinear transformation $\Phi$. We tested our theory in a first setup with $5$ latent variables $Z_1,\dots,Z_5$, each with $5$ categories. The latent variables were sampled from independent categorical distributions with uniform probabilities. The transformation $\Phi$ was obtained by a Multi-Layer Perceptron (MLP) with four hidden layers and Leaky ReLU nonlinearities (see Appendix \ref{app:details_symbolic}). The resulting representation vectors $\mathbf{x} \in \mathbb{R}^{70}$ established an entangled representation (see Section \ref{sec:results_OOD} for a discussion). The multi-WTA encoder was implemented as an MLP with four hidden layers and $5$ WTA heads. We used 5 outputs for each WTA head to get a WTA representation $\hat{\mathbf{z}}$ with the same structure as the latent variables, consistent with our theory. Since the WTA structure matches the latent structure as assumed in the theory, we refer to this setup as the {\em matched} setup.

A single fully connected layer was applied to $\hz$ with $n$ independent logistic sigmoids as output nonlinearities, resulting in the task prediction vector $\hat{\mathbf{y}}\in (0,1)^n$ ($n$ is the number of tasks). 
We generated a set of $n=30$ tasks by randomly sampling a tuple $(\mathbf{p}^{(i)},\mathbf{q}^{(i)},P(A_i=1))$ for each task $i$ (see Section \ref{sec:problem}). The resulting posterior class probabilities were used as targets to minimize the binary cross entropy of the network output. 
We trained the network on $100,000$ input samples for $1,000$ epochs and tested the network on $10,000$ samples. We performed this training procedure five times with different random seeds. 
The tasks were solved perfectly in $3$ out of the $5$ runs (Mean Absolute Error (MAE) less than $10^{-6}$ on the test set). Consistent with our theory, those models which solved the tasks evolved a symbolic representation of $\mathbf{z}$ in $\hat{\mathbf{z}}$. In other words, each WTA head represented one latent variable with categories represented in permuted order with respect to $\mathbf{z}$. The mapping is illustrated in Appendix \ref{app:results_symbolic}, Fig.~\ref{fig:5x5mapping}.

We next tested our model in a setup that does not match the theoretical assumptions in Section \ref{sec:theoresults}. In practice, the number of categories per hidden variable is typically not known in advance. We therefore did not match the dimensionality of the WTA heads to the latent categories. Instead, we used $10$ outputs in each of the five WTA heads for latent variables $Z_1,\dots,Z_5$ with $l_1=5$, $l_2=8$, $l_3=5$, $l_4=3$, and $l_5=9$ categories respectively. In this setup we had more categories and therefore we used a larger entangled representational space $\mathbf{x} \in \mathbb{R}^{100}$.
We trained the model on $n=50$ tasks five times with different random seeds.
$4$ out of $5$ models solved the tasks perfectly.
The resulting representation $\hat{\mathbf{z}}$ of one model, which solved the tasks perfectly, is shown in Fig.~\ref{fig:repr}a. Again, each WTA head can be mapped to exactly one latent variable.
For a given WTA head, several outputs are mapped to one category such that they encode a category in a shared manner. Each category can be decoded by a logic OR over these outputs, hence the network has developed a symbolic representation of the latent variables. 
We evaluated the representations of each model (see Appendix \ref{app:details_symbolic}) and found that all four models that solved the tasks perfectly evolved a fully symbolic representation of the latents. The model which did not solve the tasks perfectly still recovered  $28$ of the $30$ categories correctly and mapped $3$ WTA heads correctly to $3$ latent factors (see Appendix \ref{app:results_symbolic}, Fig. \ref{fig:heatmap_fail_unmatched}). In this case, two neurons within a single WTA head $\hat{\mathbf{z}}^{(3)}$ jointly represented two categories of the second latent factor $\mathbf{z}^{(2)}$. If one of these two neurons was active, the value of $\mathbf{z}^{(2)}$ could not be decoded unambiguously. This ambiguity could be resolved with the help of WTA head $\hat{\mathbf{z}}^{(2)}$. This head had 10 neurons, and encoded the 3 categories of $\mathbf{z}^{(4)}$. So it was able to encode additional information about $\mathbf{z}^{(2)}$ by using different neurons for a single category of $\mathbf{z}^{(4)}$.

In summary, our results show that multi-task learning with WTA bottlenecks can extract symbolic representations also in the case when the multi-WTA bottleneck exceeds the number of latent categories. However, as the bottleneck size increases, the network representation becomes less constrained and training can find loopholes to represent latents in a partly entangled manner.

\input{figureMappingOODonly}

\subsection{Symbolic representations improve generalization}
\label{sec:results_OOD}

We next compared the generalization capability of classifiers trained on the evolved symbolic representation $\hat{\mathbf{z}}$ to that of classifiers trained on the entangled representation $\mathbf{x}$ in the unmatched setup described above. We examined three different types of generalization.  

First, we randomly split the set $\mathcal{Z}$ of all possible $5400$ latent vectors into three disjoint sets, a training set $\mathcal{Z}_{\text{train}}$, a validation set $\mathcal{Z}_{\text{val}}$, and a test set $\mathcal{Z}_{\text{test}}$ (we refer to this split as the {\em random train-test split}). Consequently, the models were tested on category combinations that were not part of the training procedure.
We defined a new linear classification task on the latent variables and trained a multi layer perceptron $\mathrm{MLP}_{\mathrm{\mathbf{x}}}$ on the entangled representation $\mathbf{x}$ and another multi layer perceptron $\mathrm{MLP}_{\mathrm{\hz}}$ on the multi-WTA representation $\hz$. 
To ensure a fair comparison,
we used for both models the validation set for hyperparameter optimization (HPO) and tested them on the test set. We used the Area Under the Receiver Operating Characteristic curve (AUC)\citep{FAWCETTAUROC} to evaluate the generalization ability of both models in dependence of the training set size, see Fig.~\ref{fig:repr}b. Even for a training set size of $100$ samples (less than $2$\% of all possible latent vectors), $\mathrm{MLP}_{\mathrm{\hat{\mathbf{z}}}}$ exhibited nearly perfect generalization with a mean AUC of $0.993 \pm 0.005$ (mean $\pm$ standard deviation (SD)). The test performance of $\mathrm{MLP}_{\mathrm{\mathbf{x}}}$ was clearly worse even for $4400$ training samples ($\mathrm{AUC}=0.973 \pm 0.01$; train: $\mathrm{AUC}=0.991 \pm 0.007$). 

We next tested generalization to samples that include a combination of  feature-values of the first two latent variables that has never been observed during training (see Appendix \ref{app:detials_OOD}, we refer to this split as the {\em pair-of-categories train-test split}). The gap increased for this stronger type of generalization between both models, see Fig.~\ref{fig:repr}c. Again, for a training set size of $100$ samples, $\mathrm{MLP}_{\mathrm{\hat{\mathbf{z}}}}$ exhibited good generalization (test: $\mathrm{AUC}=0.987 \pm 0.006$; train: $\mathrm{AUC}>0.999$ ($\mathrm{SD}<0.001$)). The test performance of $\mathrm{MLP}_{\mathrm{\mathbf{x}}}$ at $4400$ training samples dropped strongly ($\mathrm{AUC}=0.917 \pm 0.059$; train: $\mathrm{AUC}=0.917 \pm 0.074$).

It was argued that disentangled representations possess the ability to automatically generalize over irrelevant factors. For example, if a learner has learned to distinguish red and green berries for one berry shape, then it can generalize this distinction to other berry shapes, as color is disentangled from shape \citep{johnston2023abstract,vafidis2025disentangling}. In fact, this test has been proposed to identify entangled representations \citep{johnston2023abstract}. 
To this end, we trained MLPs  on a task which did not depend on the first latent variable. Here, the training set contained only samples where the first latent variable has the constant value $1$. The test set consisted of all possible samples which were not used in the training procedure (see Appendix \ref{app:detials_OOD}, we refer to this split as the {\em constant-category train-test split}). $\mathrm{MLP}_{\mathbf{x}}$ achieved an AUC of $0.936 \pm 0.028$ on the training set and an AUC of $0.864 \pm 0.023$ on the test set. Hence, it did not generalize well, indicating an entangled representation. In contrast, $\mathrm{MLP}_{\hz}$ exhibited excellent generalization with an AUC of $>0.999$ ($\mathrm{SD}<0.001$) on the training set and an AUC of $0.999$ ($\mathrm{SD}<0.001$) on the test set.

\subsection{Training with confounding latent factors}
\label{sec:results_confound}

In the setups above, all latent factors were relevant and we assumed an injective map $\Phi$ from the latents to entangled representation $\mathbf{x}$. In reality, this mapping may be confounded by other factors that are neither observable nor relevant for the learner (and therefore considered noise). We emulated this situation by introducing additional confounding latent variables in $\mathbf{z}$ that did not influence the posterior of the tasks used to extract the WTA representation. From the viewpoint of the network, these confounding variables thus introduced noise in $\mathbf{x}$. 
To test the resilience of our model to such noise, we considered the unmatched setup as described above, with the addition of $5$ confounding latent variables with same structure as the original latent variables. We used a larger entangled representational space with $\mathbf{x} \in \mathbb{R}^{500}$, to avoid that the additional noise reduces the information of $\mathbf{z}$ in $\mathbf{x}$.  
We trained again five models with different random seeds, and got five models which were able to solve the tasks perfectly. Furthermore, all models developed a symbolic representation of the non-confounding latents. In one model, one WTA head did not exclusively encode only one latet factor but was correlated with two factors (see Appendix \ref{app:results_confound}). Nevertheless, the multi-WTA representation of this model was a symbolic representation according to our definition. 
The generalization performance of networks trained on these representations was comparable to the setup without confounding factors with a somewhat smaller gap between the two representations (see Appendix \ref{app:results_confound}).

\subsection{Number of tasks during multi-task learning}
\label{sec:results_number_of_tasks}
We next examined the number of tasks necessary during the multi-task learning to obtain a symbolic representation of latents. For this experiment we used the matched setup (which has $l=25$ latent categories) in order to stay as close as possible to the theory. We trained the model on $1$, $5$, $10$, $15$, $20$, $25$, and $30$ tasks. 
Consistent with the theory, with $25$ tasks, a symbolic representation emerged if the model was able to solve all tasks perfectly ($2$ out of $5$ models). However, this was also the case for $20$ tasks ($3$ out of $5$ models solved the tasks perfectly), indicating that fewer tasks suffice in some setups. When less than $20$ tasks were used, training did not converge to optimal performance (see Appendix \ref{app:results_num_tasks} for detailed results).

\subsection{Symbolic representations from visual input}
\label{sec:results_dsprites}

\input{figureDsprites}

So far, we defined the entanglement transformation $\Phi$ explicitly through a neural network. We next asked whether our results also apply to a categorical variant of the standard dsprites data set \citep{dsprites17} that has been used frequently to study disentanglement \citep{burgess2018understanding}. The dsprites data set consists of images of simple shapes (rectangle, circle, heart). We used non-rotated shapes at $64$ different locations and colorized them with $10$ different colors, resulting in four latent factors with $l_1=3$ (shape), $l_2=8$ (position x), $l_3=8$ (position y), $l_4=10$ (shape color). Hence, the transformation $\Phi$ is implicit in the generation of the input images.

We trained a model with four WTA heads, each comprising 10 output neurons, following the procedure described in Section~\ref{sec:results_symbolic}. The model was trained on $n = 50$ tasks for $1{,}000$ epochs, repeated across five random seeds. 
In these simulations, we added an $\ell_1$ regularization term with coefficient $3 \times 10^{-4}$ to the readout layer weights. This regularization promotes sparse connectivity, thereby discouraging the model from distributing information across multiple neurons within a WTA head and instead encouraging the use of minimal, selective subsets of neurons. Such sparsity facilitates the emergence of symbolic representations. Preliminary experiments without this regularization led to weaker symbolic representations. See Appendix \ref{app:details_dsprites} for details.

In three out of the five runs, the model converged to a perfect symbolic representation. See Fig.~\ref{fig:dsprites}a for an example.  In the remaining two runs, a symbolic representation emerged for $26$ and $28$ categories out of the total $29$ categories respectively.
Detailed results can be found in Appendix~\ref{app:results_dsprites}.
We also repeated the generalization experiment described in Section \ref{sec:results_OOD} for the dsprites data set, with a test set that includes pairs of latent variables that have never been observed during training (pair-of-categories train-test split). The results, summarized in Fig.~\ref{fig:dsprites}b and Table~\ref{tab:auc_dsprites}, again demonstrate a near-perfect generalization when using 1000 training samples ($\mathrm{AUC}>0.999$ ($\mathrm{SD}<0.001$)). In contrast, the performance of an MLP trained on the entangled representation ${\mathbf{x}}$ was substantially lower ($\mathrm{AUC}=0.938 \pm0.044$).

\section{Discussion}

We have proven that under certain conditions, neural networks with multi-WTA bottlenecks evolve representations that recover categorical latent factors of their inputs up to a structured permutation.
This result is surprising as it shows that categorical latent variables can be recovered exactly from arbitrarily entangled representations. The disentanglement does however not come for free. It necessitates a set of sufficiently different tasks that indirectly provide information about the latents. The number of tasks needed is however moderate. According to our theory, the full rank of the task matrix $W$ is sufficient, which implies that the number of tasks should be at least $l$, that is, the cumulative number of categories of the latent variables. Our simulation results indicate that in practical settings, symbolic representations emerge even for smaller numbers of tasks (Section \ref{sec:results_number_of_tasks}). Our theoretical result is based on the assumption that these tasks are predictions of posterior probabilities over classes (leading to a quasi-linear model). This raises the question whether other types of tasks that arise naturally in biological or practically relevant situations would induce the same or at least similar types of representations.  Natural candidates are self-supervised tasks such as autoencoding \citep{bank2023autoencoders,kingma2013auto} or prediction \citep{nagai2019predictive,oord2018representation}. Since our multi-WTA bottleneck naturally implements a sparse code, autoencoding relates our framework to sparse autoencoders \citep{makhzani2013k,makhzani2015winner}.

To derive our theoretical result, we assumed that the multi-WTA bottleneck has the same structure as the latent variables (i.e., the number of outputs of the WTAs match the number of number of categories of the latents). Our simulation results revealed that symbolic representations also emerge if the number of WTA outputs exceed the category number. This however can only be true up to some extent, as for large WTAs, the network can find easy ways to encode the values of latents in an entangled manner. It would be interesting to investigate whether there are conditions for the emergence of symbolic representations in this more general case. 

From the neuroscience perspective, our model fits well to the known importance of WTA motifs in cortical circuits \citep{douglas1989canonical,jonke2017feedback} and the sparse coding theory \citep{olshausen2004sparse}. It is also in line with symbol-like object representations found in higher cortical areas \citep{quiroga2005invariant}. Some authors have argued that linear mixtures of latent variables have advantages over (symbolic) axis-aligned representations in terms of a higher flexibility to learn new functions \citep{rigotti2013importance,johnston2023abstract,vafidis2025disentangling}. We argue that both types of representations have advantages (see also below), which means that likely a larger family of representations is extracted and utilized in cortex.

Both, in terms of machine learning and neuroscience, symbolic representations have the advantage that they can function as an interface to symbolic reasoning modules, giving rise to high-level cognitive capabilities such as abstraction, variable binding \citep{feldman2013neural,muller2020model}, and mathematical skills. Therefore, symbolic feature extractors could play a key role in the rapidly growing field of neurosymbolic artificial intelligence. Our theory also hints towards an explanation for the miraculous generalization capabilities of architectures based on softmax attention \citep{vaswani2017attention}.

\begin{ack}
This research was funded in whole or in part by the Austrian Science Fund (FWF) [10.55776/COE12] (JG, SH, RL), and by NSF EFRI grant \#2318152 (RL).
\end{ack}

{
\bibliographystyle{unsrt}
\bibliography{literature}
}

\clearpage

\appendix

\setcounter{figure}{0}
\setcounter{table}{0}

\renewcommand{\thefigure}{A\arabic{figure}}
\renewcommand{\thetable}{A\arabic{table}}

\section{Technical appendices and supplementary material}

\subsection{Definitions and derivations for Section \ref{sec:problem}}
\label{app:Problem}

\paragraph{Definition of symbolic representations:}
Consider a representation $\hz=(\hat z_1, \dots, \hat z_{l'})^\top \in \{0,1\}^{l'}$ and a total latent vector $\mathbf{z}=(z_1, \dots, z_l)^\top$.

We say that a category $z_i$ is represented in a symbolic manner if there exists a subset $\mathcal I^{(i)} \subseteq \{1, \dots ,l'\}$ such that
\begin{enumerate}
    \item $z_i = 0 \Rightarrow \hat z_j = 0 \quad \forall j \in \mathcal I^{(i)}$,
and
\item $z_i = 1 \Rightarrow \exists j \in \mathcal I^{(i)}\text{ such that } \hat z_j = 1$.
\end{enumerate}
We refer to the number of categories for which these conditions hold as \textit{number of symbolic-encoded categories}.
We say that a representation $\hz$ is a symbolic representation of $\mathbf{z}$, if all categories $z_i$ of $\mathbf{z}$ are represented in a symbolic manner in $\hz$.

\paragraph{Definition of linear tasks:} We define $n$ linear binary classification tasks based on the latent categorical variable vector $\mathbf{z}$. The $i$-th task assumes that $\mathbf{z}$ is originating from one of two classes indicated by $A_i=1$ (class 1) and $A_i=0$ (class 0) with respective prior probabilities $P(A_i=1)$ and $P(A_i=0)$. If $\mathbf{z}$ originated from class 1 (resp.~0), it is assumed that each $\mathbf{z}^{(j)}$ was drawn from a categorical distribution with parameters $\mathbf{p}^{(i,j)}$ (resp.~$\mathbf{q}^{(i,j)}$). Again, we write $\mathbf{p}^{(i)}$ (resp.~$\mathbf{q}^{i}$) for the concatenation of these parameter vectors, i.e., $\mathbf{p}^{(i)}=(\mathbf{p}^{(i,1)};...;\mathbf{p}^{(i,m)})^{\top}$. 

A standard and well-known derivation shows that the posterior $P(A_i=1|\mathbf{z})$, that $\mathbf{z}$ belongs to class $A_i=1$, can be written as
\begin{align}
    P(A_i=1|\mathbf{z}) &= \sigma \left(\ln \frac{P(A_i=1, \mathbf{z})}{P(A_i=0,\mathbf{z})}\right)\\
    &= \sigma  \left(\ln \frac{P(\mathbf{z}|A_i=1)}{P(\mathbf{z}|A_i=0)} + \ln \frac{P(A_i=1)}{P(A_i=0)}\right).
    \label{eq:posterior}
\end{align}

We can write $P(\mathbf{z}^{(j)}|A_i=1)=\prod_{k=1}^{l_j}  (p^{(i,j)}_k)^{z^{(j)}_{k}}$, with $k$ denoting the category index of a latent categorical variable $j$. We assume that the latent categorical variables are independent. Therefore, we can multiply over all categoricals and write $P(\mathbf{z}|A_i=1)=\prod_{j=1}^m \prod_{k=1}^{l_j} (p^{(i,j)}_k)^{z^{(j)}_{k}}$. The same applies to $P(\mathbf{z}|A_i=0)$. We use this in Eq.~\eqref{eq:posterior} to obtain
\begin{align}
    P(A_i=1|\mathbf{z}) = \sigma \left(\sum_{j=1}^m \sum_{k=1}^{l_j} z^{(j)}_{k} \ln \frac{p^{(i,j)}_k}{q^{(i,j)}_k}+ \ln \frac{P(A_i=1)}{P(A_i=0)}\right).
    \label{eq:posterior2}
    \end{align}
We can merge the indices $j$ and $k$ into a single index $j$, which runs over all $l$ categories
    \begin{align}
    P(A_i=1|\mathbf{z})= \sigma \left(\sum_{j=1}^{l} z_{j} \ln \frac{p^{(i)}_j}{q^{(i)}_j}  + \ln \frac{P(A_i=1)}{P(A_i=0)}\right).
   \end{align}
We define $w_{ij}= \ln \frac{p^{(i)}_j}{q^{(i)}_j}$ and $b_i=\ln \frac{P(A_i=1)}{P(A_i=0)}$ and get
\begin{align}
   P(A_i=1|\mathbf{z})= \sigma \left(\sum_{j=1}^{l} w_{ij} z_{j}  + b_i \right) = \sigma \left(\mathbf{w}_i^{\top} \mathbf{z} + b_i\right).
\label{eq:posteriorOneTask_app}
\end{align}
It is convenient to stack the posteriors of all tasks together. We thus define $P(A|\mathbf{z}) \in [0,1]^n$ as a vector with the $i$-th component given by $P(A_i=1|\mathbf{z})$.
Therefore, we can write
\begin{equation}
    \label{eq:posteriorAllTask_with_b_app}
    P(A|\mathbf{z}) = \sigma \left(W \mathbf{z} + \mathbf{b}\right),
\end{equation}  
with $W=[w_{ij}]_{\substack{i=1,\dots,n\\j=1,\dots ,l}}$ and $\mathbf{b}=(b_1, \dots, b_n)^\top$.~ 
Note however that, since each $\mathbf{z}\in \mathcal{Z}$ has exactly $m$  1-entries, $\mathbf{b}$ can be incorporated into $W$. To incorporate $\mathbf{b}$ into $W$, one can add $\frac{b_i}{m}$ to each each entry in the $i$-th row of $W$ for each $i$.
Hence, we can write
\begin{equation}
    \label{eq:posteriorAllTask_app}
    P(A|\mathbf{z}) = \sigma \left(W \mathbf{z}\right),
\end{equation}
for this re-definition of W.

\paragraph{Definition of structured permutation:} 
We define the latent structure with the help of the vector $\bm{\ell}$ of the number of latent categories for each latent factor
\begin{equation}
  \bm{\ell}=(l_1,\dots ,l_m).
\end{equation}
We furthermore define the function $t_{\bm{\ell}}:\{0,\dots,m\}\rightarrow \mathbb{N}$ as
\begin{equation}
  t_{\bm{\ell}}(k)=\sum_{i=1}^k l_i.
\end{equation}
Note that $t_{\bm{\ell}}(k-1)+1,\dots,t_{\bm{\ell}}(k)$ are the indices of latent variable $Z_k$ in $\mathbf{z}$. 
We now formally define a structured permutation on the structure $\bm{\ell}$ in terms of valid permutation matrices $R$ on that structure such that $\hat{\mathbf{z}}^\top=\mathbf{z}^\top R$ is a structured permutation of $\mathbf{z}$. Let $\pi_d$ denote the set of permutations of the vector $(1,\dots, d)^\top$. We define the set of valid permutations $\pi_{\bm{\ell}}$ of latents in structure $\bm{\ell}$ as 
\begin{equation}
    \pi_{\bm{\ell}}=\{\mathbf{s} | \mathbf{s}\in \pi_m \text{ and } l_i=l_{s_i} \forall i\in\{1,\dots,m\}\}. 
\end{equation}

$R$ is a structured permutation matrix for the structure $\bm{\ell}$ if
\begin{itemize}
    \item $R$ is a permutation matrix, and
    \item there exists an $\mathbf{s}\in \pi_{\bm{\ell}}$ such that the submatrix $[r_{ij}]_{\substack{i=t_{\bm{\ell}}(s_i-1)+1,\dots,t_{\bm{\ell}}(s_i) \\ j=t_{\bm{\ell}}(i-1)+1,\dots, t_{\bm{\ell}}(i)}}$ of $R$ is a permutation matrix for all $i\in\{1,\dots, m\}$.
\end{itemize}

\subsection{Proof of Theorem \ref{thm:Theorem_main}}
\label{app:Proof_permutation}

In this section, we prove Theorem \ref{thm:Theorem_main} from Section \ref{sec:theoresults}. In the main text, we wrote $\hat C$ for the code matrix derived from the multi-WTA bottleneck. To avoid cluttered notation, we write $D$ for this matrix here. Recall from the main text: We say that a matrix $C$ has an $m$-fold one-hot column-block structure if every row of $C$ consists of $m$ consecutive blocks where each block has exactly one $1$ entry and the other entries in the block are $0$.
We first prove the following intermediate lemma, where $\bm{1}_p$ denotes the all-ones column vector of size $p$.
\begin{lemma}
\label{lem:lem1}
    Let $m, l_\textnormal{c}$ be natural numbers with $l_\textnormal{c}\ge 2$. Let $l=m l_\textnormal{c}$ and $p=l_\textnormal{c}^m$. Let $C,D \in \{0,1\}^{p \times l}$ be two matrices with an $m$-fold one-hot column-block structure. Both $C$ and $D$ consist of all possible rows of that type. 
Assume that $C= D G$ for some matrix $G\in \mathbb{R}^{l \times l}$. Then, $C$ can be expressed as $C = D Q + \bm{1}_p \mathbf{b}^\top$ for a matrix $Q\in\{0,1\}^{l \times l}$ and a vector $\mathbf b\in\mathbb{Z}^l$. Furthermore, tile each column of $Q$ into blocks of size $l_\textnormal{c}$. Then, in every column of $Q$, no block consists entirely of $1$'s.
\end{lemma}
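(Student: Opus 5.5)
The plan is to treat one column of $C$ at a time and read it as a function on the grid of all possible rows of $D$. Index the rows of $D$ by tuples $\mathbf{d}=(d_1,\dots,d_m)\in\{1,\dots,l_\textnormal{c}\}^m$, where $d_k$ is the position of the $1$ in the $k$-th block. Because $D$ contains every row of this type, each tuple occurs exactly once. Fix a column index $j$ and let $\mathbf{g}$ be the $j$-th column of $G$, split into blocks $\mathbf{g}^{(1)},\dots,\mathbf{g}^{(m)}$ of length $l_\textnormal{c}$. Then $C=DG$ says that the $j$-th column of $C$, viewed as a function of the row, is
\[
f(\mathbf{d})=\sum_{k=1}^m h_k(d_k),\qquad h_k(a):=g^{(k)}_a ,
\]
and that $f$ takes values in $\{0,1\}$ on the whole grid. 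So each column of $C$ is a $\{0,1\}$-valued \emph{additive} function on $\{1,\dots,l_\textnormal{c}\}^m$. I will build the $j$-th column $\mathbf{q}$ of $Q$ and the entry $b_j$ from this description. Everything is column-wise, so the lemma follows once this is done for each $j$.

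\textbf{Step 1: at most one block matters.} I will show that at most one $h_k$ is nonconstant. Suppose $h_k$ and $h_{k'}$ are both nonconstant for some $k\neq k'$. Pick $a,a'$ with $\alpha:=h_k(a')-h_k(a)\neq 0$, and $c,c'$ with $\beta:=h_{k'}(c')-h_{k'}(c)\neq0$. Fix the remaining coordinates arbitrarily and let $x$ be the value of $f$ at $(d_k,d_{k'})=(a,c)$. Varying only $d_k$ and $d_{k'}$ gives the four values $x$, $x+\alpha$, $x+\beta$, $x+\alpha+\beta$, all of which must lie in $\{0,1\}$.
\begin{itemize}
\item Since $x$ and $x+\alpha$ are both in $\{0,1\}$ and $\alpha\neq0$, we get $\alpha=\pm1$; likewise $\beta=\pm1$.
\item Then $x+\alpha+\beta\in\{0,1\}$ forces $\alpha+\beta\in\{-1,0,1\}$, hence $\beta=-\alpha$.
\item But then $x+\alpha$ and $x-\alpha$ both lie in $\{0,1\}$ together with $x$, i.e.\ $x-1,x,x+1\in\{0,1\}$, which is impossible.
\end{itemize}
This mixed-difference argument is the main step; the rest is bookkeeping.

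\textbf{Step 2: build $\mathbf{q}$ and $b_j$.} There are two cases.
\begin{itemize}
\item If every $h_k$ is constant, then $f$ is a constant in $\{0,1\}$. Set $\mathbf{q}=\mathbf{0}$ and $b_j$ equal to that constant.
\item Otherwise exactly one $h_k$ is nonconstant, and $f(\mathbf{d})=h_k(d_k)+c$ with $c$ the sum of the other constants. Since $f$ is nonconstant and $\{0,1\}$-valued, $h_k+c$ takes both values $0$ and $1$ on $\{1,\dots,l_\textnormal{c}\}$. Let $\mathbf{q}$ be zero outside block $k$, and on block $k$ the indicator of $\{a: h_k(a)+c=1\}$. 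Set $b_j=0$.
\end{itemize}

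\textbf{Step 3: verify.} Because each row of $D$ has exactly one $1$ per block, $(D\mathbf{q})(\mathbf{d})=q^{(k)}_{d_k}=f(\mathbf{d})$ in the second case, and $D\mathbf{0}+b_j\mathbf{1}_p=f$ in the first. Hence the $j$-th column of $C$ equals $D\mathbf{q}+b_j\bm{1}_p$, with $\mathbf{q}\in\{0,1\}^l$ and $b_j\in\{0,1\}\subset\mathbb{Z}$. Collecting the columns gives $C=DQ+\bm{1}_p\mathbf{b}^\top$.

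For the block condition: in the first case every block of $\mathbf{q}$ is zero. In the second case, the blocks other than $k$ are zero, and block $k$ contains a $0$ because the value $0$ is attained by $h_k+c$. Since $l_\textnormal{c}\ge2$, every block has at least one entry, so a zero block is not all ones. Therefore no block of any column of $Q$ consists entirely of $1$'s.
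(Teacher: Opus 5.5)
Your proof is correct, but it takes a different route from the paper's.

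The paper varies only \emph{one} block at a time. Fixing all coordinates outside block $\beta$ and letting $d_\beta$ range over $\{1,\dots,l_\textnormal{c}\}$ shows that the entries of block $\beta$ of the column $\mathbf{g}$ lie in a two-element set $\{x,x+1\}$. So each block of $\mathbf{g}$ is a base value plus a $\{0,1\}$-vector. The base values are summed into $b_j$, all-ones blocks are traded for $+1$'s in $b_j$, and integrality of $\mathbf{b}$ is read off from $C,D,Q$ being integer. That argument never excludes several nonconstant blocks in the same column. The interaction between blocks is handled only later, in the proof of Theorem~\ref{thm:Theorem_main}, via the column-sum identity $b_j=(1-\bm{1}_l^\top\mathbf{q}^{\text{col},j})/l_\textnormal{c}$ and the interval $\{s_\text{min},\dots,s_\text{max}\}$ of attainable values of $\mathbf{d}\,\mathbf{q}^{\text{col},j}$.

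Your mixed-difference argument over two blocks $k\neq k'$ settles that interaction already at the lemma stage. You therefore get a strictly stronger conclusion: every column of $Q$ is supported on a single block, $b_j\in\{0,1\}$ (so integrality is trivial), and $b_j=0$ whenever the corresponding column of $C$ is nonconstant.

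The paper's single-block argument is the minimal one for the lemma as stated, whereas yours front-loads the work. In the theorem's setting every column of $C$ is nonconstant, since it has $l_\textnormal{c}^{m-1}$ ones among $l_\textnormal{c}^m$ rows. So your construction gives $\mathbf{b}=\bm{0}$ immediately. Comparing column sums, $|S|\,l_\textnormal{c}^{m-1}=l_\textnormal{c}^{m-1}$ for the support $S$ of each column of $Q$, then forces $|S|=1$. This replaces the paper's separate $\mathbf{b}=\bm{0}$ and $s_\text{max}=1$ steps.
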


\begin{proof}
    
Denote by $\mathbf{d}^{\text{row},k}$ ($\mathbf{d}^{\text{col},k}$) the $k$-th row (column) of a matrix $D$. We have 
\begin{equation}
\label{eq:inner_prod}
    c_i^{\text{row},k}= \mathbf{d}^{\text{row},k} \mathbf{g}^{\text{col},i}.
\end{equation}
Consistent with the $m$-fold one-hot column block structure of $D$, we can divide $D$ into $m$ column blocks and index these blocks by $1,\dots,m$.
We see from Eq.~\eqref{eq:inner_prod} that each column block corresponds to a block of rows in $G$ in the sense that the vector of the $\beta$-th block in a row of $D$ is multiplied with the vector of the $\beta$-th block in a column of $G$. We thus divide the rows of $G$ in such blocks and index them the same way by $1,\dots,m$.

Consider an arbitrary column block $\beta$ of $D$ and an arbitrary column $i$ of $G$. Since, by definition, $D$ contains all one-hot-block combinations, there exist $l_c$ rows in $D$ which are identical outside the column block $\beta$ and necessarily different within the column block. In the following, we consider only these $l_c$ rows of $D$ and call the set of these row indices $K$. Denote by $\alpha_k^{\beta} \in \{1,...,l_c\}$ the index within column block $\beta$ such that $d^{\text{row},k,\beta}_{\alpha_k^{\beta}}=1$, where $\mathbf{d}^{\text{row},k,\beta}$ is a vector containing block $\beta$ from row $k$ of $D$. For all $\alpha_k^{\beta} \in \{1,...,l_c\}$ it holds that
\begin{equation}
    \label{eq:twovalues01}
    c_i^{\text{row},k} = const^{i,\beta} + g^{\text{col},i,\beta}_{\alpha_k^{\beta}} \in \{0,1\},
\end{equation}
where $k \in K$ and $const^{i,\beta}$ is a constant term for a specific $i$ and $\beta$. $\mathbf{g}^{\text{col},i,\beta}$ is a vector containing block $\beta$ from column $i$ of $G$. Eq.~\eqref{eq:twovalues01} implies that $g^{\text{col},i,\beta}_{\alpha_k^{\beta}} \in \{-const^{i,\beta},1-const^{i,\beta}\}$. Therefore, $g^{\text{col},i,\beta}_{\alpha_k^{\beta}}$ can take only two different values, a base value $g_\text{base}^{\text{col},i,\beta}=-const^{i,\beta}$ or the value $g_\text{base}^{\text{col},i,\beta}+1$.

Hence, we have
\begin{equation}
     \label{eq:binarizeQ_vec}
    c_i^{\text{row},k}= \mathbf{d}^{\text{row},k} \mathbf{g}^{\text{col},i} = \sum_{\beta=1}^m g_\text{base}^{\text{col},i,\beta} + \mathbf{d}^{\text{row},k} \mathbf{q}^{\text{col},i} = g_\text{base}^{\text{col},i} + \mathbf{d}^{\text{row},k} \mathbf{q}^{\text{col},i},
\end{equation}
with $\mathbf{q}^{\text{col},i} \in \{0,1\}^l$ and $g_\text{base}^{\text{col},i}=\sum_{\beta=1}^m g_\text{base}^{\text{col},i,\beta}$. We compile $g_\text{base}^{\text{col},i}$ in a vector $\mathbf{b}$ such that $b_i=\sum_{\beta=1}^m g_\text{base}^{\text{col},i,\beta}$ and write this in matrix form:
\begin{equation}
    \label{eq:binarizeQ}
    C = D Q + \bm{1}_p \mathbf{b^\top}.
\end{equation}
Now assume that $\mathbf{q}^{\text{col},i}$ in Eq.~\eqref{eq:binarizeQ_vec} has a block that consists of only $1$'s. This adds a contribution of $1$ independent of $\mathbf{d}$. Therefore, we replace all blocks of column $i$ consisting only of $1$'s of $\mathbf{q}^{\text{col},i}$ with blocks consisting only of $0$'s and add for each of these blocks a $1$ to $b_i$. We do this for all columns $i$. Since $C, D, Q$ are integer matrices, it follows that $\mathbf{b} \in \mathbb{Z}^l$.
\end{proof}

Using this result, we prove Theorem \ref{thm:Theorem_main} from the main text. We reiterate Theorem 1 here (with $D$ replacing $\hat C$ from the main text).
\setcounter{theorem}{0}
\begin{theorem}
\label{thm:Theorem_app}
    Let $m, l_\textnormal{c}$ be natural numbers with $l_\textnormal{c}\ge 2$. Let $l=m l_\textnormal{c}$ and $p=l_\textnormal{c}^m$. Let $C, D \in \{0,1\}^{p \times l}$ be two matrices with an $m$-fold one-hot column-block structure. Both $C$ and $D$ consist of all possible rows of that type. 
Assume that $C= D G$ for some matrix $G\in \mathbb{R}^{l \times l}$. Then, the columns of $D$ are a structured permutation of the columns of $C$.
\end{theorem}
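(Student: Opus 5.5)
We start from Lemma~\ref{lem:lem1}, which gives $C = DQ + \bm{1}_p\mathbf{b}^\top$ with $Q\in\{0,1\}^{l\times l}$, $\mathbf b\in\mathbb{Z}^l$, and no column block of $Q$ consisting entirely of $1$'s. Fix a column $i$ of $C$ and write $\mathbf q=\mathbf q^{\text{col},i}$, split into blocks $\mathbf q^{\beta}$, $\beta=1,\dots,m$. For a row $\mathbf d$ of $D$ with active indices $\alpha^1,\dots,\alpha^m$, we have $c_i = b_i + \sum_{\beta} q^{\beta}_{\alpha^\beta}$. Because $D$ contains all rows, the tuple $(\alpha^1,\dots,\alpha^m)$ ranges over all of $\{1,\dots,l_\textnormal{c}\}^m$. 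Let $r$ be the number of blocks $\beta$ with $\mathbf q^\beta\neq 0$. Each such block is neither all zero nor (by the lemma) all one. Choosing $\alpha^\beta$ independently, the sum $\sum_\beta q^\beta_{\alpha^\beta}$ therefore takes every value in $\{0,1,\dots,r\}$. Since $c_i\in\{0,1\}$, this forces $r\le 1$.

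If $r=0$, column $i$ of $C$ would be constant. This is impossible: every column of $C$ contains both $0$ and $1$ entries, since $l_\textnormal{c}\ge2$ and $C$ contains all rows. Hence $r=1$, and the sum takes both values $0$ and $1$, which forces $b_i=0$. So column $i$ of $C$ equals $\sum_{\alpha\in S_i}\mathbf d^{\text{col},(\beta_i,\alpha)}$, where $\beta_i$ is a single block of $D$ and $S_i\subsetneq\{1,\dots,l_\textnormal{c}\}$ is nonempty. In words, each column of $C$ is the indicator that the $\beta_i$-th WTA of $D$ lands in the set $S_i$.

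Next we show that the map $i\mapsto\beta_i$ is constant on each column block of $C$, and that the sets $S_i$ within a block are singletons forming a partition. Fix a block $\gamma$ of $C$ with columns $i_1,\dots,i_{l_\textnormal{c}}$. Since $C$ is one-hot on this block, $\sum_j \mathbf c^{\text{col},i_j}=\bm 1_p$.

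Suppose two columns $i,i'$ in block $\gamma$ had $\beta_i\neq\beta_{i'}$. Choose a row of $D$ with $\alpha^{\beta_i}\in S_i$ and $\alpha^{\beta_{i'}}\in S_{i'}$; such a row exists because $D$ contains all rows. Both $c_i$ and $c_{i'}$ would then equal $1$, contradicting the one-hot property of $C$ in block $\gamma$. Thus all columns of block $\gamma$ use the same $D$-block $\beta(\gamma)$. Their sets $S_{i_j}$ must be pairwise disjoint, again by the one-hot property, and must cover $\{1,\dots,l_\textnormal{c}\}$, since otherwise some row of $C$ would be all zero on the block. We have $l_\textnormal{c}$ disjoint nonempty sets covering $l_\textnormal{c}$ elements, so each $S_{i_j}$ is a singleton and $j\mapsto S_{i_j}$ is a bijection.

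Finally we show that $\gamma\mapsto\beta(\gamma)$ is injective. If two distinct $C$-blocks $\gamma\neq\gamma'$ mapped to the same $D$-block, then the restrictions of rows of $C$ to blocks $\gamma,\gamma'$ would be determined by a single coordinate $\alpha^{\beta}$. That would give at most $l_\textnormal{c}$ distinct pairs, whereas $C$ contains all $l_\textnormal{c}^2$ combinations on those two blocks. So $\beta$ is a bijection between blocks, and within each block the columns are matched bijectively. This says $C=DR$ for a permutation matrix $R$ satisfying the block condition in the definition of a structured permutation (all $l_i$ are equal, so every block permutation is valid).

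The main obstacle is the first step, namely the argument that $r\le1$ and $b_i=0$. It relies on the lemma's normalization excluding all-ones blocks and on the full-product structure of the rows of $D$. The remaining steps are counting arguments using that $C$ contains all rows.
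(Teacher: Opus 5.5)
Your proof is correct, and its first step is the paper's. Starting from Lemma~\ref{lem:lem1}, both arguments use that $D$ contains all rows and that no block of $\mathbf{q}^{\text{col},i}$ is all ones, so $\mathbf{d}\,\mathbf{q}^{\text{col},i}$ ranges over a full integer interval $\{0,\dots,r\}$. Together with $c_i\in\{0,1\}$ and the non-constancy of the columns of $C$, this forces exactly one active block and $b_i=0$.

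From there the routes differ. The paper gets a single $1$ in each column of $Q$ from a column-sum identity. Since $\bm{1}_p^\top C=\bm{1}_p^\top D=l_\textnormal{c}^{m-1}\bm{1}_l^\top$, one has $b_j=(1-\bm{1}_l^\top\mathbf{q}^{\text{col},j})/l_\textnormal{c}$, so $b_j=0$ immediately gives $\bm{1}_l^\top\mathbf{q}^{\text{col},j}=1$. It then uses that the columns of $C$ are pairwise distinct to conclude that $Q$ is a permutation matrix. Only at the end does it apply the ``two ones in one block'' contradiction to see that each $C$-block draws from a single $D$-block.

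You apply that contradiction first and let the one-hot structure of $C$ do the rest. The supports $S_i$ within a $C$-block are disjoint and cover $\{1,\dots,l_\textnormal{c}\}$, hence are singletons. The presence of all $l_\textnormal{c}^2$ value pairs on two $C$-blocks then makes the block map injective.

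What your version buys: it is purely combinatorial and needs neither the column-sum bookkeeping nor the distinctness of columns. It also gives a transparent intermediate picture of each column of $C$ as the indicator that one WTA head of $D$ lands in a set $S_i$. What the paper's identity buys: it is the quicker way to get singletons, since it settles each column of $Q$ in isolation.

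You leave one step implicit, namely the direction of the conclusion. You obtain $C=DR$, while the statement concerns the columns of $D$. This follows from $D=CR^\top$, since the transpose of a structured permutation matrix is again one.
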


\begin{proof}
We know from Lemma \ref{lem:lem1} that we can write $C$ as $C = D Q + \bm{1}_p \mathbf{b}^\top$ for a matrix $Q\in\{0,1\}^{l \times l}$ and a vector $\mathbf{b}\in\mathbb{Z}^l$. Furthermore, we know that there is no block in the column vectors of $Q$ that consists of only $1$'s.
We first show that we can write $C = D Q$, i.e., we show that $\mathbf{b}=\bm{0}$.

First note that the column sums of $C$ and $D$ are identical. Each column has exactly $s = l_\textnormal{c}^{m-1}$ ones (since in each block, each of the $l_\textnormal{c}$ positions appears in exactly $l_\textnormal{c}^{m-1}$ rows).

This gives a formula for $\mathbf{b}$.
Summing over rows,
\begin{equation}
    \bm{1}_p^\top C = \bm{1}_p^\top D Q + p \mathbf{b}^\top.
\end{equation}

Since $\bm{1}_p^\top D = \bm{1}_p^\top C = s \bm{1}_l^\top$, we get
\begin{equation}
    p \mathbf{b}^\top = s(\bm{1}_l^\top - \bm{1}_l^\top Q).
\end{equation}
Hence, for each column $j$,
\begin{equation}
\label{eq:colsum}  
    b_j = (1 -  \bm{1}_{l}^\top \mathbf{q}^{\text{col},j}) / l_\textnormal{c}  
\end{equation}
(because $p = s l_\textnormal{c}$).
Recall that $b_j \in \mathbb{Z}$.

We determine now the set of values that $(D \mathbf{q}^{\text{col},j})_i$ can take.
Fix a column $j$ of $Q$. Partition $\mathbf{q}^{\text{col},j}$ into $m$ blocks of length $l_\textnormal{c}$. For block $k$, let $a_k$ be the number of ones of $\mathbf{q}^{\text{col},j}$ in that block ($0 \le a_k \le l_\textnormal{c}$).

For any row $\mathbf{d}$ of $D$, the scalar $(\mathbf{d} \mathbf{q}^{\text{col},j})$ equals the number of blocks $k$ for which $\mathbf{d}$ picks a position where $\mathbf{q}^{\text{col},j}$ has a 1. As $\mathbf{d}$ ranges over all rows of $D$ (all choices of one position per block), the attainable values of $\mathbf{d} \mathbf{q}^{\text{col},j}$ form exactly the full integer interval
\begin{equation}
    \{\mathbf{d} \mathbf{q}^{\text{col},j} : \mathbf{d} \text{ is a row of } D\} = \{ s_\text{min}, s_\text{min} + 1, \dots , s_\text{max} \},
\end{equation}
where
\begin{itemize}
    \item $s_\text{min} = |\{k : a_k = l_\textnormal{c}\}|$ (number of blocks that are all-ones),
    \item $s_\text{max} = |\{k : a_k \ge 1\}|$ (number of blocks that have at least one 1).
\end{itemize}
In particular, we have $s_\text{min} =0$ since by construction $\mathbf{q}^{\text{col},j}$ has no block with all-ones (see Lemma \ref{lem:lem1}). Furthermore, $1$ is attainable iff there is at least one block with $a_k \ge 1$. 

Because $C$ has all possible rows of the one-hot-per-block type and $l_\textnormal{c} \ge  2$, each column of $C$ contains both $0$'s and $1$'s across the $p$ rows (its column sum is $s = p/l_\textnormal{c}$). 

Therefore, for each $j$,
$\{ c_{i j} | i=1,\dots,p \} = \{0, 1\}$.
But $C_{i j} = \mathbf{d}^{\text{row},i} \mathbf{q}^{\text{col},j} + b_j$. Hence $\{ \mathbf{d}^{\text{row},i} \mathbf{q}^{\text{col},j} + b_j | i=1,\dots,p \} = \{0, 1\}$ for a constant $b_j$.
From above, we know that the attainable set of $\mathbf{d}^{\text{row},i} \mathbf{q}^{\text{col},j}$ is an interval of consecutive integers $\{s_\text{min}, \dots, s_\text{max}\}$ with $s_\text{min}=0$. Therefore, we have to find a $s_{\text{max}} \in \mathbb{N}$ and a $b_j \in \mathbb{Z}$ such that $\{0, \dots, s_\text{max}\} + b_j = \{0,1\}$ holds. The only way it can be exactly two points that become $\{0,1\}$ after adding $b_j$ is hence $s_\text{max}=1$, $b_j = 0$.

Since this holds for every column $j$, we conclude that $\mathbf{b}=\bm{0}$.

For each column of $C$, we have a column sum of $l_c^{m-1}$ and $l_c^m$ rows. Therefore, each column of $C$ contains both $1$'s and $0$'s. Assume there are two identical columns in one block. Therefore, there exists a row in which both columns contain the value $1$. This contradicts the one-hot column block structure of $C$. Therefore, two different columns within the same block cannot be identical.

Assume two columns from different blocks of $C$ are identical. Since, by definition, $C$ contains all one-hot block combinations, there exists a row in $C$ where only one of the two columns contains the value $1$. This leads to a contradiction, since the two columns are no longer identical. Therefore, two columns from different blocks cannot be identical. Hence, all columns in $C$ are unique, and the same holds for $D$.

We now show that $Q$ is a permutation matrix. From Eq.~\eqref{eq:colsum}, it follows that the column sum of $Q$ equals $1$ for each column. This implies that each column of $Q$ has exactly one $1$. Therefore, for each column $\mathbf{c}^{\text{col},i}$ of $C$ there exists a $j \in \{1,...,l\}$ such that $\mathbf{c}^{\text{col},i} = \mathbf{d}^{\text{col},j}$. Assume there exist two columns $\mathbf{q}^{\text{col},i}$ and $\mathbf{q}^{\text{col},j}$ in $Q$ with $i,j \in \{1,...,l\}$ and $i \neq j$ such that $\mathbf{q}^{\text{col},i}=\mathbf{q}^{\text{col},j}$. It follows that $\mathbf{c}^{\text{col},i} = \mathbf{c}^{\text{col},j}$, which contradicts that all columns in $C$ are unique. Therefore, all columns in $Q$ are unique and contain exactly one $1$. This implies that $Q$ is a permutation matrix.

Finally, we show that $Q$ implements a structured permutation. Consider an arbitrary block $i$ from $C$. Assume that two columns of different blocks of $D$ were mapped to two columns of block $i$ from $C$. Since, by definition, $D$ contains all one-hot block combinations, there exists a row in $D$ where both columns contain a $1$, and this results in a row in $C$ that has two $1$'s in the single block $i$. This contradicts the one-hot column block structure of $C$. Therefore, all columns in a block of $C$ must come from a single block of $D$. Therefore, only the columns within a block can be permuted and since all blocks have the same size, the entire blocks can be permuted.

Hence, the columns of $C$ are a structured permutation of the columns of $D$ (permutation of blocks and within-block permutations), which further implies that also the columns of $D$ are a structured permutation of the columns of $C$.
\end{proof}

\subsection{Details to Section \ref{sec:results_symbolic}.}
\label{app:details_symbolic}
\paragraph{Architecture of $\Phi$}
We used an MLP with $4$ layers and a Leaky ReLU as activation function for $\Phi$. The dimensions of the layers were $(25\rightarrow70\rightarrow70\rightarrow70)$ for the matched setup, $(30\rightarrow100\rightarrow100\rightarrow100)$ for the unmatched setup, and $(60\rightarrow500\rightarrow500\rightarrow500)$ for the experiment with confounding latent variables. The linear layers in the MLPs were randomly initialized from a uniform distribution, which is implemented by default in PyTorch linear layers \citep{pytorchLinearLayer}, and the weights were frozen afterwards.
\paragraph{Architecture of multi-WTA encoder}
We used an MLP with $4$ layers and Leaky ReLU as activation function except for the last layer. Directly before the last layer, we used layer normalization. We used five WTA heads on top of the last layer as described in Section \ref{sec:problem}. Each WTA head consisted of a Gumbel-softmax function with discretized one-hot vectors as output. The one-hot vectors of all WTA heads were stacked together to obtain $\hat{\mathbf{z}}$. The layer dimensions of the MLP were $(70\rightarrow 70\rightarrow70\rightarrow25)$ for the matched setup, $(100\rightarrow100\rightarrow100\rightarrow50)$ for the unmatched setup, and $(500\rightarrow500\rightarrow500\rightarrow50)$ for the experiment with confounding latent variables.
\paragraph{Architecture of readout layer}
We used a linear layer without a bias to project $\hat{\mathbf{z}}$ to the $n$ tasks. Finally, a sigmoid activation function is applied on top to predict the $n$ tasks.
\paragraph{Task generation}
We generated a task according to the following procedure. We sampled for the $i$-th task a categorical distribution parametrization $\mathbf{p}^{(i,j)}$ for each latent variable $j$ using a Dirichlet distributions with $\alpha=1$ for all categories. We stacked all $\mathbf{p}^{(i,j)}$ together to obtain $\mathbf{p}^{(i)}$. We repeated this process a second time to obtain $\mathbf{q}^{(i)}$. We sampled the prior probability $P(A_i=1)$ from a uniform distribution on the interval $(0,1)$. Subsequently we computed $w_{ij}= \ln \frac{p^{(i)}_j}{q^{(i)}_j}$ and $b_i=\ln \frac{P(A_i=1)}{P(A_i=0)}$ and calculated the posterior for each sample according to Eq.~\eqref{eq:posteriorOneTask_app} to get the target value for each sample.

Input vectors were produced based on vectors $\mathbf{z}$ that were sampled from categorical distributions with uniform probabilities over categories.

\paragraph{Hyperparameters for multi-task learning} We used the hyperparameters shown in Table \ref{tab:multi_task_hyperparameter} to train the models in the multi-task learning setup. To enable the models to find the optimal solution, we used high learning rates and a large temperature $\tau$ within the Gumbel-softmax. The high learning rate and the large temperature enable the model to explore different mappings between $\mathbf{z}$ and $\hat{\mathbf{z}}$. We also used an exponential decay for $\tau$ and cosine annealing for the learning rate to reduce exploration towards the end of training and converge to a final model.

For multi-task learning, networks were trained on $100,000$ input samples for $1,000$ epochs and tested the network on $10,000$ samples. We performed this training procedure five times with different random seeds which lead to different network initializations and different shuffles of the training samples.

\begin{table}[t]
    \caption{Hyperparameter used for multi-task training. We used AdamW as optimizer \citep{loshchilov2018decoupled}.}
    \label{tab:multi_task_hyperparameter}
    \centering
    \begin{tabular}{lrrr}
    \toprule
    & \multicolumn{3}{c}{Setup}\\
    \cmidrule(lr){2-4}
    Hyperparameter & unmatched & unmatched confounding & matched\\
    \midrule
        optimizer & AdamW  & AdamW & AdamW\\
        optimizer $\beta_1,\beta_2$ & 0.9, 0.999 & 0.9, 0.999 & 0.9, 0.999\\
        learning rate & 0.020 & 0.005 & 0.008\\
        learning rate scheduler & cos. annealing & cos. annealing & cos. annealing\\
        learning rate scheduler $\eta_{\mathrm{min}}$ & $10^{-6}$ & $10^{-6}$ & $10^{-6}$\\
        learning rate scheduler $T_{\mathrm{max}}$ & 1000 & 2000 & 1000\\
        weight decay & 0.0 & 0.0 & 0.0\\
        number of epochs & 1000 & 2000 & 1000\\
        Gumbel softmax $\tau$ & 20 & 20 & 6\\
        Gumbel softmax $\tau$ exp. decay & 0.999 & 0.999 & 0.999\\
        number of tasks & 50 & 50 & 25\\
    \bottomrule
    \end{tabular}
\end{table}

\paragraph{Evaluation of symbolic representation}
We evaluated whether a representation is symbolic by checking the conditions given in {\em Definition of symbolic representations} in Appendix \ref{app:Problem}.

In addition, we report the number of factors for which all its categories are encoded by a single WTA head as \textit{localized factors}.

\paragraph{Task performance metrics}
Task performance was evaluated using the AUC. The target values for a task are probability values ranging from $0$ to $1$. For the AUC, we dichotomized the probability value. Values greater than or equal to $0.5$ were set to $1$, and values less than $0.5$ were set to $0$. The dichotomized probability values and the model output were used to calculate the AUC. 

To evaluate how closely the target probabilities align with the model outputs during multi-task training, we used the mean absolute error (MAE).
The tasks were considered perfectly solved if the MAE on the test set was less than $10^{-6}$. This threshold was raised in Section \ref{sec:results_confound} to $10^{-3}$ due to the additional noise in $\mathbf{x}$.

\subsection{Details to Section \ref{sec:results_OOD}.}
\label{app:detials_OOD}

\paragraph{Architecture of readout MLP}
We used for $\mathrm{MLP}_{\mathbf{x}}$ and $\mathrm{MLP}_{\hat{\mathbf{z}}}$ an MLP with Leaky ReLU activation function after each layer except for the last layer which consisted of a single neuron with a logistic sigmoid nonlinearity. The number and dimensions of hidden layers were not fixed but determined by hyperparameter optimization, see paragraph {\em Training procedure} below. This ensured a fair comparison as the hyperparameter optimization chooses the hyperparameter with best validation binary cross entropy. 

\paragraph{Train-test splits for generalization evaluations}
We split the input samples $\mathcal{Z}$ into training set $\mathcal{Z}_{\text{train}}$, validation set $\mathcal{Z}_{\text{val}}$, and test set $\mathcal{Z}_{\text{test}}$ in three different ways, in order to
test different types of generalization in our models. 
For each split type, we had  $\mathcal{Z}_{\text{train}} \cup \mathcal{Z}_{\text{val}} \cup \mathcal{Z}_{\text{test}}=\mathcal{Z}$ and $\mathcal{Z}_{\text{train}} \cap \mathcal{Z}_{\text{val}} \cap \mathcal{Z}_{\text{test}}=\{\}$.

For the first split type (random train-test split),
we randomly chose $500$ samples for the validation and $500$ for the test set. The other $4400$ samples were used for the training set. Hence, the test set consisted of category combinations of the latent variables which were not part of the training process.

In the second split type (pair-of-categories train-test split), we made sure that the models were tested on a test set which consisted of samples with a particular category combination of the first two latents which has never been seen during training.
To this end, we selected the first category of the first latent variable and the first category of the second latent variable and used all $135$ samples, which contain this combination, for the test set.  We used the second category of the first latent variable and the second category of the second latent variable and used all $135$ samples which contain this combination for the validation set. The other $5130$ samples which were not part of test and validation set were used for the training set.

For the third split type (constant-category train-test split),
we selected the first category of the first latent variable and used all $1080$ samples which contain this category for the training set. We selected the second category of the first latent variable and used all $1080$ samples which contain this category for the validation set. The other $3240$ samples which were not part of training and validation set were used for the test set.

\paragraph{Training procedure}
We used a Bayesian hyperparameter optimization approach (Tree-structured Parzen Estimator) \citep{optunaHPO} to find appropriate hyperparameter for $\mathrm{MLP}_{\mathbf{x}}$ and $\mathrm{MLP}_{\hat{\mathbf{z}}}$. For each experiment we used $20$ trials to find an appropriate number of layers,  dimension size of hidden layers, learning rate, and so on. The optimized hyperparameters and the search space is depicted in Table \ref{tab:hpo_search_space}.
If the hyperparameter dropout value was greater than $0$, we used a dropout layer after the Leaky ReLU activation in each hidden layer.
If the hyperparameter layernorm was True, we used a layer normalization on the input.
In each trial, an MLP was trained for a maximum of $500$ epochs, where training terminated early if the validation loss did not improve within $10$ epochs. The hyperparameter from the trial that performed best on the validation set were used for the final run. The resulting MLP was then evaluated on the test set. 

We performed this training procedure five times with different random seeds which lead to different network initializations and differently shuffles of the training samples. In addition, for each seed, an independently sampled task was generated on which the generalization was evaluated.

\begin{table}[t]
    \caption{Hyperparameters and the corresponding search space used for hyperparameter optimization.}
    \label{tab:hpo_search_space}
    \centering
    \begin{tabular}{lrrr}
    \toprule
    Hyperparameter & Range\\
    \midrule
        hidden dim. & $\{16,32,64,128,256\}$\\
        number of hidden layers & $\{1,2,3,4,5\}$\\
        dropout & $\{0,0.1,0.2\}$\\
        layernorm & $\{\mathrm{True}, \mathrm{False}\}$\\
        learning rate & $[10^{-4},10^{-1}]$\\
        weight decay & $[10^{-6},10^{-2}]$\\
        batch size & $\{16,32,64,128,256\}$\\
    \bottomrule
    \end{tabular}
\end{table}

\subsection{Details to Section \ref{sec:results_dsprites}}
\label{app:details_dsprites}

\paragraph{Dsprites data set} The dsprites data set introduced in \cite{dsprites17} is built on 5 latent factors: shape, scale, orientation, x-, and y-position. Scale and orientation are not suitable as categorical features, so they were set to a fixed value (largest scale and no rotation). 
The position factors, with 32 values in each direction, were subsampled evenly to 8 possible position in each dimension. To increase the variety of possible feature combinations, we introduced color to the dataset, increasing the dimensions from $64 \times 64$ to $64 \times 64 \times 3$ and assigning a set of visually distinct colors, generated by heuristically selecting points from a coarse RGB lattice, emphasizing variation across color channels. This results in a total of $1920$ samples. 

To test generalization, the pair-of-categories train-test split described above in Section \ref{app:detials_OOD} was used as follows: 
\begin{itemize}
    \item All samples with elliptic shape and the first 3 colors were used as validation set.
    \item All samples with rectangular shape and the subsequent 3 colors were used as test set.
    \item All remaining samples were used as training set. 
\end{itemize}
Hence, the validation and test set each consisted of $3 \cdot 8 \cdot 8 = 192$ samples, 10\% of the total data set.

\paragraph{Architecture} The network architecture was similar to Section~\ref{sec:results_symbolic} and \ref{sec:results_OOD}, described in \ref{app:details_symbolic} and \ref{app:detials_OOD}. It only differed slightly in the encoder architecture, which used 4 layers $(128 \rightarrow 128 \rightarrow 128 \rightarrow 40)$, followed by 4 WTA heads of size 10.

\paragraph{Training procedure}
We used the same training procedure as in Section~\ref{sec:results_symbolic} and \ref{sec:results_OOD}, described in \ref{app:details_symbolic} and \ref{app:detials_OOD}.

\subsection{Supplementary results for Section \ref{sec:results_symbolic}}
\label{app:results_symbolic}
Figure~\ref{fig:5x5mapping} illustrates the structured permutation learned by one of the models in the matched setup, for which a fully symbolic representation emerges. 

Figure~\ref{fig:heatmap_fail_unmatched} shows the empirical conditional activation heatmap for the network in the unmatched setup where $28$ out of the $30$ latent categories were symbolically represented.

\begin{figure}[t]
    \centering
    \definecolor{wtaone}{RGB}{90,35,130}
    \definecolor{wtatwo}{RGB}{0,170,200}
    \definecolor{wtathree}{RGB}{190,43,28}
    \definecolor{wtafour}{RGB}{120,115,0}
    \definecolor{wtafive}{RGB}{31,88,88}
    
    \begin{tikzpicture}
        \begin{axis}[
            width=0.5\linewidth,
            height=0.55\linewidth,
            xmin=0, xmax=26,
            ymin=0, ymax=26,
            label style={font=\small \sffamily \sansmath},
            tick label style={font=\small \sffamily \sansmath},
            ylabel={latent factors $\mathbf{z}$},
            ylabel style={yshift=5pt},
            xlabel={WTA output $\hat{\mathbf{z}}$},
            xtick={5.5,10.5,15.5,20.5,25.5},
            ytick={5.5,10.5,15.5,20.5,25.5},
            xticklabel style={
            yshift=-2pt,
            },
            xticklabel=\empty,
            yticklabel=\empty,
            grid=major,
            axis lines=left,
            tick align=outside,
            clip=false,
            legend columns=3,
            legend style={
            at={(1,1.05)},
            inner sep=2pt,
            anchor=south east,
            font=\small \sffamily \sansmath,
            row sep=-2pt,
            /tikz/column sep=1pt
            },
        ]

        \addplot[
            only marks,
            mark=*,
            mark size=1.5pt,
            wtafour
        ] table[
        x=wta,
        y=z,
        col sep=comma
        ] {WTA4_5x5_mapping_baseline_30tasks_results.csv};
        \addlegendentry{factor $1$};

        \addplot[
            only marks,
            mark=*,
            mark size=1.5pt,
            wtatwo
        ] table[
        x=wta,
        y=z,
        col sep=comma
        ] {WTA2_5x5_mapping_baseline_30tasks_results.csv};
        \addlegendentry{factor $2$};

        \addplot[
            only marks,
            mark=*,
            mark size=1.5pt,
            wtafive
        ] table[
        x=wta,
        y=z,
        col sep=comma
        ] {WTA5_5x5_mapping_baseline_30tasks_results.csv};
        \addlegendentry{factor $3$};

        \addplot[
            only marks,
            mark=*,
            mark size=1.5pt,
            wtathree
        ] table[
        x=wta,
        y=z,
        col sep=comma
        ] {WTA3_5x5_mapping_baseline_30tasks_results.csv};
        \addlegendentry{factor $4$};
        
        \addplot[
            only marks,
            mark=*,
            mark size=1.5pt,
            wtaone
        ] table[
        x=wta,
        y=z,
        col sep=comma
        ] {WTA1_5x5_mapping_baseline_30tasks_results.csv};
        \addlegendentry{factor $5$};

        \draw[wtaone, very thick]  (axis cs:0.5,0) -- (axis cs:5.5,0);
        \draw[wtatwo, very thick]  (axis cs:5.5,0) -- (axis cs:10.5,0);
        \draw[wtathree, very thick](axis cs:10.5,0) -- (axis cs:15.5,0);
        \draw[wtafour, very thick] (axis cs:15.5,0) -- (axis cs:20.5,0);
        \draw[wtafive, very thick] (axis cs:20.5,0) -- (axis cs:25.5,0);
    
        \node[wtaone,font=\scriptsize]   at (axis cs:3,-1) {\sffamily{WTA1}};
        \node[wtatwo,font=\scriptsize]   at (axis cs:8,-1) {\sffamily{WTA2}};
        \node[wtathree,font=\scriptsize] at (axis cs:13,-1) {\sffamily{WTA3}};
        \node[wtafour,font=\scriptsize]  at (axis cs:18,-1) {\sffamily{WTA4}};
        \node[wtafive,font=\scriptsize]  at (axis cs:23,-1) {\sffamily{WTA5}};

        \node[font=\scriptsize \sansmath]   at (axis cs:-1.5,3) {$\mathbf{z}^{(1)}$};
        \node[font=\scriptsize \sansmath]   at (axis cs:-1.5,8) {$\mathbf{z}^{(2)}$};
        \node[font=\scriptsize \sansmath] at (axis cs:-1.5,13) {$\mathbf{z}^{(3)}$};
        \node[font=\scriptsize \sansmath]  at (axis cs:-1.5,18) {$\mathbf{z}^{(4)}$};
        \node[font=\scriptsize \sansmath]  at (axis cs:-1.5,23) {$\mathbf{z}^{(5)}$};
    
        \draw[wtafour, very thick]  (axis cs:0,0.5) -- (axis cs:0,5.5);
        \draw[wtatwo, very thick] (axis cs:0,5.5) -- (axis cs:0,10.5);
        \draw[wtafive, very thick] (axis cs:0,10.5) -- (axis cs:0,15.5);
        \draw[wtathree, very thick](axis cs:0,15.5) -- (axis cs:0,20.5);
        \draw[wtaone, very thick]  (axis cs:0,20.5) -- (axis cs:0,25.5);
    
        \end{axis}
    \end{tikzpicture}
    \caption{Mapping from $\hat{\mathbf{z}}$ to $\mathbf{z}$ after multi-task learning (first seed, matched setup) on $n=30$ tasks. The model solved all tasks perfectly after training. Each factor consists of $5$ categories and each WTA head consists of $5$ outputs. $\hat{\mathbf{z}}$ is a structured permuted version of $\mathbf{z}$. Therefore, a symbolic representation emerged in the model.}
    \label{fig:5x5mapping}
\end{figure}

\begin{figure}[t]
    \centering
    \begin{tikzpicture}
                \begin{axis}[
                width=0.8\linewidth,
                height=0.5\linewidth,
                colormap/viridis,
                colorbar,
                axis on top,
                xlabel={},
                ylabel={},
                ylabel style={yshift=-5pt},
                label style={font=\small \sffamily \sansmath},
                tick label style={font=\small \sffamily \sansmath},
                enlargelimits=false,
                xticklabel=\empty,
                yticklabel=\empty,
                xtick={9.5,19.5,29.5,39.5},
                ytick={4.5,12.5,17.5,20.5},
                grid=major,
                clip=false
                ]
                \addplot[
                matrix plot*,
                point meta=explicit,
                mesh/cols=50
                ]
                table[
                x=zhat,
                y=z,
                meta=value,
                col sep=comma,
                ]{symbolic_rep_5_1ts_fail.csv};

                \node[font=\scriptsize \sansmath]   at (axis cs:-3,2) {$\mathbf{z}^{(1)}$};
                \node[font=\scriptsize \sansmath]   at (axis cs:-3,8.5) {$\mathbf{z}^{(2)}$};
                \node[font=\scriptsize \sansmath] at (axis cs:-3,15) {$\mathbf{z}^{(3)}$};
                \node[font=\scriptsize \sansmath]  at (axis cs:-3,19) {$\mathbf{z}^{(4)}$};
                \node[font=\scriptsize \sansmath]  at (axis cs:-3,25) {$\mathbf{z}^{(5)}$};

                \node[font=\scriptsize \sansmath]   at (axis cs:5,-2) {$\hat{\mathbf{z}}^{(1)}$};
                \node[font=\scriptsize \sansmath]   at (axis cs:15,-2) {$\hat{\mathbf{z}}^{(2)}$};
                \node[font=\scriptsize \sansmath] at (axis cs:25,-2) {$\hat{\mathbf{z}}^{(3)}$};
                \node[font=\scriptsize \sansmath]  at (axis cs:35,-2) {$\hat{\mathbf{z}}^{(4)}$};
                \node[font=\scriptsize \sansmath]  at (axis cs:45,-2) {$\hat{\mathbf{z}}^{(5)}$};
                
                \end{axis}
            \end{tikzpicture}
    \caption{The heatmap shows the empirical conditional activation between input latent categories $\mathbf{z}^{(i)}$ and output latent dimensions $\hat{\mathbf{z}}^{(j)}$, where each entry is given by $P(\mathbf{\hat{z}}^{(j)} = 1 \mid \mathbf{z}^{(i)} = 1)$. Here, $\hat{\mathbf{z}}$ is from a trained model in the unmatched setup where a perfect symbolic representation did not emerge in $\hat{\mathbf{z}}$. In this specific training run, two neurons within a single WTA head $\hat{\mathbf{z}}^{(3)}$ (numbers 5 and 10) jointly represented two categories of the second latent factor $\mathbf{z}^{(2)}$. If one of these two neurons was active, the value of $\mathbf{z}^{(2)}$ could not be decoded unambiguously. This ambiguity could be resolved with the help of WTA head $\hat{\mathbf{z}}^{(2)}$, where selective conditional activations can be seen for the two ambiguous categories of $\mathbf{z}^{(2)}$.}
    \label{fig:heatmap_fail_unmatched}
\end{figure}

\subsection{Supplementary results for Section \ref{sec:results_OOD}}
\label{app:results_OOD}
Table~\ref{tab:cauc_random_combinations} shows the detailed AUC metrics for training $\mathrm{MLP}_{\mathbf{x}}$ and  $\mathrm{MLP}_{\mathbf{\hat{z}}}$ on random train-test splits. The same metrics are shown in Table~\ref{tab:auc_2combi} for the pair-of-categories train-test split. The detailed results for training on the constant-category train-test split are shown in Table~\ref{tab:r2_constant_factor}. 

\begin{table}[H]
    \caption{Train and test performance (AUC) of $\mathrm{MLP}_{\mathbf{x}}$ and $\mathrm{MLP}_{\hat{\mathbf{z}}}$ for the random train-test split with varying number of training samples. The mean and standard deviation of the AUC across $5$ runs with different seeds are reported.
    }
    \label{tab:cauc_random_combinations}
    \centering
    \begin{tabular}{lrrrrrrrr}
    \toprule
    & \multicolumn{4}{c}{Train AUC} & \multicolumn{4}{c}{Test AUC}\\
    \cmidrule(lr){2-5} \cmidrule(lr){6-9}
    \# train samples & Mean $\mathbf{x}$ & SD $\mathbf{x}$ & Mean $\hat{\mathbf{z}}$ & SD $\hat{\mathbf{z}}$ & Mean $\mathbf{x}$ & SD $\mathbf{x}$& Mean $\hat{\mathbf{z}}$ & SD $\hat{\mathbf{z}}$ \\
    \midrule
    100 & 0.896 & 0.092 & 1.000 & <0.001 & 0.796 & 0.065 & 0.993 & 0.005 \\
    500 & 0.952 & 0.009 & 1.000 & <0.001 & 0.900 & 0.027 & 1.000 & <0.001 \\
    1000 & 0.968 & 0.017 & 1.000 & <0.001 & 0.931 & 0.029 & 1.000 & <0.001 \\
    2000 & 0.980 & 0.011 & 1.000 & <0.001 & 0.951 & 0.020 & 1.000 & <0.001 \\
    4400 & 0.991 & 0.007 & 1.000 & <0.001 & 0.973 & 0.010 & 1.000 & <0.001 \\
    \bottomrule
    \end{tabular}
\end{table}

\begin{table}[H]
    \caption{Train and test performance (AUC) of $\mathrm{MLP}_{\mathbf{x}}$ and $\mathrm{MLP}_{\hat{\mathbf{z}}}$ for the pair-of-categories train-test split with varying number of training samples. The mean and standard deviation of the AUC across $5$ runs with different seeds are reported.}
    \label{tab:auc_2combi}
    \centering
    \begin{tabular}{lrrrrrrrr}
    \toprule
    & \multicolumn{4}{c}{Train AUC} & \multicolumn{4}{c}{Test AUC}\\
    \cmidrule(lr){2-5} \cmidrule(lr){6-9}
    \# train samples & Mean $\mathbf{x}$ & SD $\mathbf{x}$ & Mean $\hat{\mathbf{z}}$ & SD $\hat{\mathbf{z}}$ & Mean $\mathbf{x}$ & SD $\mathbf{x}$& Mean $\hat{\mathbf{z}}$ & SD $\hat{\mathbf{z}}$ \\
    \midrule
    100 & 0.771 & 0.118 & 1.000 & <0.001 & 0.737 & 0.119 & 0.987 & 0.006 \\
    500 & 0.756 & 0.145 & 1.000 & <0.001 & 0.718 & 0.131 & 1.000 & <0.001 \\
    1000 & 0.858 & 0.060 & 1.000 & <0.001 & 0.857 & 0.062 & 1.000 & <0.001 \\
    2000 & 0.876 & 0.083 & 1.000 & <0.001 & 0.888 & 0.059 & 1.000 & <0.001 \\
    4400 & 0.917 & 0.074 & 1.000 & <0.001 & 0.917 & 0.059 & 1.000 & <0.001 \\
    \bottomrule
    \end{tabular}
\end{table}

\begin{table}[H]
    \caption{Train and test performance (AUC) of $\mathrm{MLP}_{\mathbf{x}}$ and $\mathrm{MLP}_{\hat{\mathbf{z}}}$ for the constant-category train-test split. The mean and standard deviation of the AUC across $5$ runs with different seeds are reported.}
    \label{tab:r2_constant_factor}
    \centering
    \begin{tabular}{lrrrr}
    \toprule
    & \multicolumn{4}{c}{AUC}\\
    \cmidrule(lr){2-5}
    Set & Mean $\mathbf{x}$ & SD $\mathbf{x}$ & Mean $\hat{\mathbf{z}}$ & SD $\hat{\mathbf{z}}$ \\
    \midrule
    Train & 0.936 & 0.028 & 1.000 & <0.001 \\
    Test & 0.864 & 0.023 & 0.999 & <0.001  \\
    \bottomrule
    \end{tabular}
\end{table}

\subsection{Supplementary results for Section \ref{sec:results_confound}}
\label{app:results_confound}
Figure~\ref{fig:heatmap_fail_unmatched_unused} shows the empirical conditional activation heatmap for the network in the unmatched setup with confounding factors where a symbolic representation emerged, but one latent factor was mapped to two WTA heads.

Table~\ref{tab:confounding_auc_random_combinations} shows the detailed AUC metrics for training $\mathrm{MLP}_{\mathbf{x}}$ and  $\mathrm{MLP}_{\mathbf{\hat{z}}}$ on random train-test splits. The same metrics are shown in Table~\ref{tab:confounding_auc_2combi} for the pair-of-categories train-test split. The detailed results for training on the constant-category train-test split are shown in Table~\ref{tab:confounding_r2_constant_factor}.

\begin{figure}[H]
    \centering
    \begin{tikzpicture}
                \begin{axis}[
                width=0.8\linewidth,
                height=0.5\linewidth,
                colormap/viridis,
                colorbar,
                axis on top,
                xlabel={},
                ylabel={},
                ylabel style={yshift=-5pt},
                label style={font=\small \sffamily \sansmath},
                tick label style={font=\small \sffamily \sansmath},
                enlargelimits=false,
                xticklabel=\empty,
                yticklabel=\empty,
                xtick={9.5,19.5,29.5,39.5},
                ytick={4.5,12.5,17.5,20.5},
                grid=major,
                clip=false
                ]
                \addplot[
                matrix plot*,
                point meta=explicit,
                mesh/cols=50
                ]
                table[
                x=zhat,
                y=z,
                meta=value,
                col sep=comma,
                ]{symbolic_rep_5_1ts_fail_unused.csv};

                \node[font=\scriptsize \sansmath]   at (axis cs:-3,2) {$\mathbf{z}^{(1)}$};
                \node[font=\scriptsize \sansmath]   at (axis cs:-3,8.5) {$\mathbf{z}^{(2)}$};
                \node[font=\scriptsize \sansmath] at (axis cs:-3,15) {$\mathbf{z}^{(3)}$};
                \node[font=\scriptsize \sansmath]  at (axis cs:-3,19) {$\mathbf{z}^{(4)}$};
                \node[font=\scriptsize \sansmath]  at (axis cs:-3,25) {$\mathbf{z}^{(5)}$};

                \node[font=\scriptsize \sansmath]   at (axis cs:5,-2) {$\hat{\mathbf{z}}^{(1)}$};
                \node[font=\scriptsize \sansmath]   at (axis cs:15,-2) {$\hat{\mathbf{z}}^{(2)}$};
                \node[font=\scriptsize \sansmath] at (axis cs:25,-2) {$\hat{\mathbf{z}}^{(3)}$};
                \node[font=\scriptsize \sansmath]  at (axis cs:35,-2) {$\hat{\mathbf{z}}^{(4)}$};
                \node[font=\scriptsize \sansmath]  at (axis cs:45,-2) {$\hat{\mathbf{z}}^{(5)}$};
                
                \end{axis}
            \end{tikzpicture}
    \caption{The heatmap shows the empirical conditional activation between input latent categories $\mathbf{z}^{(i)}$ and output latent dimensions $\hat{\mathbf{z}}^{(j)}$, where each entry represents $P(\mathbf{\hat{z}}^{(j)} = 1 \mid \mathbf{z}^{(i)} = 1)$. $\hat{\mathbf{z}}$ is from a trained model in the unmatched setup with confounding factors that developed a symbolic representation. In this specific training run, one neuron within  WTA head $\hat{\mathbf{z}}^{(2)}$ (neuron 6) represented two categories of the latent factor $\mathbf{z}^{(5)}$. This ambiguity could be resolved with the help of WTA head $\hat{\mathbf{z}}^{(5)}$, where selective conditional activations can be seen for the two ambiguous categories of $\mathbf{z}^{(5)}$. Hence, although the representation was not localized to individual WTA outputs, it was still a symbolic representation.}

    \label{fig:heatmap_fail_unmatched_unused}
\end{figure}

\begin{table}[H]
    \caption{Train and test performance (AUC) of $\mathrm{MLP}_{\mathbf{x}}$ and $\mathrm{MLP}_{\hat{\mathbf{z}}}$ on the random train-test split with varying number of training samples and confounding factors. The mean and standard deviation of the AUC across $5$ runs with different seeds are reported.}
    \label{tab:confounding_auc_random_combinations}
    \centering
    \begin{tabular}{lrrrrrrrr}
    \toprule
    & \multicolumn{4}{c}{Train AUC} & \multicolumn{4}{c}{Test AUC}\\
    \cmidrule(lr){2-5} \cmidrule(lr){6-9}
    \# train samples & Mean $\mathbf{x}$ & SD $\mathbf{x}$ & Mean $\hat{\mathbf{z}}$ & SD $\hat{\mathbf{z}}$ & Mean $\mathbf{x}$ & SD $\mathbf{x}$& Mean $\hat{\mathbf{z}}$ & SD $\hat{\mathbf{z}}$ \\
    \midrule
    100 & 0.822 & 0.205 & 1.000 & <0.001 & 0.699 & 0.151 & 0.995 & 0.004 \\
    500 & 0.976 & 0.004 & 1.000 & <0.001 & 0.903 & 0.012 & 0.999 & 0.001 \\
    1000 & 0.979 & 0.010 & 1.000 & <0.001 & 0.920 & 0.011 & 1.000 & <0.001 \\
    2000 & 0.997 & 0.001 & 1.000 & <0.001 & 0.945 & 0.012 & 1.000 & <0.001 \\
    4400 & 0.996 & 0.003 & 1.000 & <0.001 & 0.970 & 0.005 & 1.000 & <0.001 \\
    \bottomrule
    \end{tabular}
\end{table}

\begin{table}[H]
    \caption{Train and test performance (AUC) of $\mathrm{MLP}_{\mathbf{x}}$ and $\mathrm{MLP}_{\hat{\mathbf{z}}}$ for the pair-of-categories train-test split with varying number of training samples and confounding factors. The mean and standard deviation of the AUC across $5$ runs with different seeds are reported.}
    \label{tab:confounding_auc_2combi}
    \centering
    \begin{tabular}{lrrrrrrrr}
    \toprule
    & \multicolumn{4}{c}{Train AUC} & \multicolumn{4}{c}{Test AUC}\\
    \cmidrule(lr){2-5} \cmidrule(lr){6-9}
    \# train samples & Mean $\mathbf{x}$ & SD $\mathbf{x}$ & Mean $\hat{\mathbf{z}}$ & SD $\hat{\mathbf{z}}$ & Mean $\mathbf{x}$ & SD $\mathbf{x}$& Mean $\hat{\mathbf{z}}$ & SD $\hat{\mathbf{z}}$ \\
    \midrule
    100 & 0.879 & 0.172 & 0.929 & 0.140 & 0.764 & 0.108 & 0.895 & 0.158 \\
    500 & 0.975 & 0.017 & 1.000 & <0.001 & 0.905 & 0.050 & 0.998 & 0.004 \\
    1000 & 0.963 & 0.010 & 1.000 & <0.001 & 0.924 & 0.046 & 0.999 & 0.001 \\
    2000 & 0.984 & 0.010 & 1.000 & <0.001 & 0.942 & 0.026 & 0.999 & 0.001 \\
    4400 & 0.974 & 0.021 & 1.000 & <0.001 & 0.939 & 0.030 & 0.999 & 0.001 \\
    \bottomrule
    \end{tabular}
\end{table}

\begin{table}[H]
    \caption{Train and test performance (AUC) of $\mathrm{MLP}_{\mathbf{x}}$ and $\mathrm{MLP}_{\hat{\mathbf{z}}}$ on the constant-factor train-test split with confounding factors. The mean and standard deviation of the AUC across $5$ runs with different seeds are reported.}
    \label{tab:confounding_r2_constant_factor}
    \centering
    \begin{tabular}{lrrrr}
    \toprule
    & \multicolumn{4}{c}{AUC}\\
    \cmidrule(lr){2-5}
    Set & Mean $\mathbf{x}$ & SD $\mathbf{x}$ & Mean $\hat{\mathbf{z}}$ & SD $\hat{\mathbf{z}}$ \\
    \midrule
    Train & 0.966 & 0.008 & 1.000 & <0.001 \\
    Test & 0.886 & 0.017 & 0.998 & 0.001 \\
    \bottomrule
    \end{tabular}
\end{table}

\subsection{Supplementary results for Section \ref{sec:results_number_of_tasks}}
\label{app:results_num_tasks}
Figure~\ref{fig:5x5mapping20tasks} show the learned structured permutation when using $20$ tasks for training in the multitask-training approach on latents with $25$ categories. Table~\ref{tab:multitasktraining_5x5_tasks} shows the training metrics for  different numbers of tasks.

\begin{figure}[t]
    \centering
    \definecolor{wtaone}{RGB}{90,35,130}
    \definecolor{wtatwo}{RGB}{0,170,200}
    \definecolor{wtathree}{RGB}{190,43,28}
    \definecolor{wtafour}{RGB}{120,115,0}
    \definecolor{wtafive}{RGB}{31,88,88}
    
    \begin{tikzpicture}
        \begin{axis}[
            width=0.5\linewidth,
            height=0.55\linewidth,
            xmin=0, xmax=26,
            ymin=0, ymax=26,
            label style={font=\small \sffamily \sansmath},
            tick label style={font=\small \sffamily \sansmath},
            ylabel={latent factors $\mathbf{z}$},
            ylabel style={yshift=5pt},
            xlabel={WTA output $\hat{\mathbf{z}}$},
            xtick={5.5,10.5,15.5,20.5,25.5},
            ytick={5.5,10.5,15.5,20.5,25.5},
            xticklabel style={
            yshift=-2pt,
            },
            xticklabel=\empty,
            yticklabel=\empty,
            grid=major,
            axis lines=left,
            tick align=outside,
            clip=false,
            legend columns=3,
            legend style={
            at={(1,1.05)},
            inner sep=2pt,
            anchor=south east,
            font=\small \sffamily \sansmath,
            row sep=-2pt,
            /tikz/column sep=1pt
            },
        ]

        \addplot[
            only marks,
            mark=*,
            mark size=1.5pt,
            wtatwo
        ] table[
        x=wta,
        y=z,
        col sep=comma
        ] {WTA2_5x5_mapping_baseline_20tasks_results.csv};
        \addlegendentry{factor $1$};

        \addplot[
            only marks,
            mark=*,
            mark size=1.5pt,
            wtafour
        ] table[
        x=wta,
        y=z,
        col sep=comma
        ] {WTA4_5x5_mapping_baseline_20tasks_results.csv};
        \addlegendentry{factor $2$};

        \addplot[
            only marks,
            mark=*,
            mark size=1.5pt,
            wtafive
        ] table[
        x=wta,
        y=z,
        col sep=comma
        ] {WTA5_5x5_mapping_baseline_20tasks_results.csv};
        \addlegendentry{factor $3$};

        \addplot[
            only marks,
            mark=*,
            mark size=1.5pt,
            wtathree
        ] table[
        x=wta,
        y=z,
        col sep=comma
        ] {WTA3_5x5_mapping_baseline_20tasks_results.csv};
        \addlegendentry{factor $4$};

        \addplot[
            only marks,
            mark=*,
            mark size=1.5pt,
            wtaone
        ] table[
        x=wta,
        y=z,
        col sep=comma
        ] {WTA1_5x5_mapping_baseline_20tasks_results.csv};
        \addlegendentry{factor $5$};
        
        \draw[wtaone, very thick]  (axis cs:0.5,0) -- (axis cs:5.5,0);
        \draw[wtatwo, very thick]  (axis cs:5.5,0) -- (axis cs:10.5,0);
        \draw[wtathree, very thick](axis cs:10.5,0) -- (axis cs:15.5,0);
        \draw[wtafour, very thick] (axis cs:15.5,0) -- (axis cs:20.5,0);
        \draw[wtafive, very thick] (axis cs:20.5,0) -- (axis cs:25.5,0);
    
        \node[wtaone,font=\scriptsize]   at (axis cs:3,-1) {\sffamily{WTA1}};
        \node[wtatwo,font=\scriptsize]   at (axis cs:8,-1) {\sffamily{WTA2}};
        \node[wtathree,font=\scriptsize] at (axis cs:13,-1) {\sffamily{WTA3}};
        \node[wtafour,font=\scriptsize]  at (axis cs:18,-1) {\sffamily{WTA4}};
        \node[wtafive,font=\scriptsize]  at (axis cs:23,-1) {\sffamily{WTA5}};

        \node[font=\scriptsize \sansmath]   at (axis cs:-1.5,3) {$\mathbf{z}^{(1)}$};
        \node[font=\scriptsize \sansmath]   at (axis cs:-1.5,8) {$\mathbf{z}^{(2)}$};
        \node[font=\scriptsize \sansmath] at (axis cs:-1.5,13) {$\mathbf{z}^{(3)}$};
        \node[font=\scriptsize \sansmath]  at (axis cs:-1.5,18) {$\mathbf{z}^{(4)}$};
        \node[font=\scriptsize \sansmath]  at (axis cs:-1.5,23) {$\mathbf{z}^{(5)}$};
    
        \draw[wtatwo, very thick]  (axis cs:0,0.5) -- (axis cs:0,5.5);
        \draw[wtafour, very thick] (axis cs:0,5.5) -- (axis cs:0,10.5);
        \draw[wtafive, very thick] (axis cs:0,10.5) -- (axis cs:0,15.5);
        \draw[wtathree, very thick](axis cs:0,15.5) -- (axis cs:0,20.5);
        \draw[wtaone, very thick]  (axis cs:0,20.5) -- (axis cs:0,25.5);
    
        \end{axis}
    \end{tikzpicture}
    \caption{Mapping from $\hat{\mathbf{z}}$ to $\mathbf{z}$ after training the model on $n=20$ tasks (first seed). The model solved all tasks perfectly after training. Each factor consists of $5$ categories and each WTA head consists of $5$ outputs. $\hat{\mathbf{z}}$ is a structured permuted version of $\mathbf{z}$. Therefore, a symbolic representation emerged in the model.}
    \label{fig:5x5mapping20tasks}
\end{figure}

\begin{table}[t]
\caption{The mean, standard deviation (SD), and minimum of the Mean Absolute Error (MAE) after multi-task training on test dataset across $5$ runs with different seeds in the matched setup. The column "\# solved perfectly" shows the number of models that solved the $n$ tasks perfectly after training. The last column depicts how many of the models which solved the $n$ tasks perfectly also have developed a valid symbolic representation. NA stands for not available, because no model solved the $n$ tasks perfectly.}
\label{tab:multitasktraining_5x5_tasks}
\centering
\begin{tabular}{lrrrrr}
\toprule
$n$ tasks & Mean MAE & SD MAE & Min MAE & \makecell{\# solved perfectly\\(max. $5$)}& \makecell{Valid symbolic rep.\\in \%}\\ 
\midrule
1 & 0.0133 & 0.0009 & 0.0123 & 0 & NA\\
5 & 0.0641 & 0.0003 & 0.0636 & 0 & NA\\
10 & 0.0943 & 0.0013 & 0.0927 & 0 & NA\\
15 & 0.0352 & 0.0428 & 0.0002 & 0 & NA\\
20 & 0.0132 & 0.0264 & 8 \texttimes{} 10\textsuperscript{-9} & 3 & 100\\
25 & 0.0141 & 0.0279 & 9 \texttimes{} 10\textsuperscript{-9} & 2 & 100\\
30 & 0.0149 & 0.0296 & 32 \texttimes{} 10\textsuperscript{-9} & 3 & 100\\
\bottomrule
\end{tabular}
\end{table}

\subsection{Supplementary results for Section \ref{sec:results_dsprites}}
\label{app:results_dsprites}

In this section we show the detailed results for experiments with the dsprites data set. Table~\ref{tab:dsprites_pretrain} shows the detailed metrics for training the encoder with WTA heads for each separate run. Table~\ref{tab:auc_dsprites} shows the generalization performance of $\mathrm{MLP}_{\mathbf{x}}$  and  $\mathrm{MLP}_{\mathbf{\hat{z}}}$ for the pair-of-categories train-test split.

\begin{table}[H]
\caption{Training performance on dsprites data set with 5 different seeds.}
\label{tab:dsprites_pretrain}
\centering
\begin{tabular}{lccc}
\toprule
Training run & MAE & \makecell{Symbolic-encoded \\ categories (max. 29)} & \makecell{Localized \\ factors (max. 4)} \\ 
\midrule
1     & 0.01139 & 28 & 2 \\
2     & 0.00018   &  29 & 4 \\
3     & 0.00019    &  29 & 4 \\
4     & 0.00019   &  29 & 4 \\
5     & 0.00020    &  26 & 2 \\
\bottomrule
\end{tabular}
\end{table}

\begin{table}[H]
\caption{Train and test performance ($AUC$) of $\mathrm{MLP}_{\mathbf{x}}$ and $\mathrm{MLP}_{\hat{\mathbf{z}}}$ on the pair-of-categories dsprites dataset with varying number of training samples. The mean and standard deviation of the $AUC$ across $5$ runs with different seeds are reported.}
\label{tab:auc_dsprites}
\centering
\begin{tabular}{lrrrrrrrr}
\toprule
 & \multicolumn{4}{c}{Train AUC} & \multicolumn{4}{c}{Test AUC}\\
\cmidrule(lr){2-5} \cmidrule(lr){6-9}
\# train samples & Mean $\mathbf{x}$ & SD $\mathbf{x}$ & Mean $\hat{\mathbf{z}}$ & SD $\hat{\mathbf{z}}$ & Mean $\mathbf{x}$ & SD $\mathbf{x}$& Mean $\hat{\mathbf{z}}$ & SD $\hat{\mathbf{z}}$ \\
\midrule
100  & 0.978 & 0.035 & 0.992 & 0.018 & 0.562 & 0.114 & 0.794 & 0.124 \\
300  & 0.960 & 0.047 & 1.000 & 0.000 & 0.766 & 0.087 & 0.908 & 0.091 \\
500  & 0.988 & 0.016 & 1.000 & 0.000 & 0.786 & 0.124 & 0.990 & 0.017 \\
1000 & 0.990 & 0.014 & 1.000 & 0.000 & 0.938 & 0.044 & 1.000 & 0.000 \\
1500 & 0.996 & 0.009 & 1.000 & 0.000 & 0.980 & 0.022 & 1.000 & 0.000 \\
\bottomrule
\end{tabular}
\end{table}

\subsection{Computational resources}
\label{app:compute}
We used for our experiments one node with a 192 Core AMD EPYC 9654 CPU and another node with a 128 Core AMD EPYC 7543 CPU and 8 Nvidia A40 GPUs. However, experiments can be run on standard CPUs and a single Nvidia A40. Multiple experiments were run in parallel on these nodes to make efficient use of the available computational resources. A single multi-task training run took between 1 and 5 hours, depending on the hyperparameters. The generalization experiment runs were shorter and typically took between 30 minutes and 2 hours. To reproduce all experiments, we estimate a total amount of 400 GPUh. There were no further compute resources needed for prior studies.

\subsection{Assets}
\label{app:assets}
The main Python packages used in our experiments were PyTorch \citep{pytorch}, PyTorch Lightning \citep{pytorch_lightning}, Optuna \citep{optunaHPO}, and Hydra \citep{Yadan2019Hydra}.

\end{document}